%% file: main.tex
\documentclass{article}

\usepackage{iclr2027_conference,times}
\input{math_commands.tex}

\usepackage{amsmath}
\usepackage{amssymb}
\usepackage{amsthm}
\usepackage{booktabs}
\usepackage{multirow}
\usepackage{tabularx}
\usepackage{enumitem}
\usepackage{graphicx}
\usepackage{float}
\usepackage{wrapfig}
\usepackage{placeins}
\usepackage{pdflscape}
\usepackage{tikz}
\usetikzlibrary{arrows.meta,positioning,fit,calc,backgrounds,tikzmark}
\usepackage{hyperref}
\usepackage{url}

\setlist[itemize]{labelindent=0pt,leftmargin=0.85em,labelwidth=0.5em,labelsep=0.35em,itemindent=0pt,align=left}
\setlist[enumerate]{labelindent=0pt,leftmargin=1.25em,labelwidth=0.9em,labelsep=0.35em,itemindent=0pt,align=left}

\newtheorem{theorem}{Theorem}

\title{CasEm: A Cascade Architecture \\ for Long-Horizon Neural Emulation}

\author{%
\makebox[\dimexpr\textwidth-2\tabcolsep\relax][c]{%
\textbf{Zhaoyi Li$^{1,2,*}$ \quad Jingtao Ding$^{3,*,\dagger}$ \quad Shihua Li$^{1,\dagger}$}%
}%
}

\iclrfinalcopy

\usepackage{ragged2e}

\begin{document}

\maketitle
\lhead{}
\begingroup
\renewcommand{\thefootnote}{}
\begin{NoHyper}
\footnotemark
\footnotetext{%
$^1$ School of Automation, Southeast University; \quad
$^2$ Beijing Zhongguancun Academy; \quad
$^3$ Department of Earth System Science, Tsinghua University.
\par
$^*$ Equal contribution. \quad
$^\dagger$ Corresponding authors: Jingtao Ding (dingjt15@tsinghua.org.cn) and Shihua Li (lsh@seu.edu.cn).%
}
\end{NoHyper}
\endgroup
\setcounter{footnote}{0}

\input{sections/abstract}
\input{sections/introduction}
\input{sections/related_work}
\input{sections/method}
\input{sections/experiments}
\input{sections/limitations_conclusion}

\bibliography{references}
\bibliographystyle{iclr2027_conference}

\clearpage
\appendix
\raggedbottom
\input{sections/theory_appendix}
\clearpage
\input{sections/experimental_details_appendix}

\FloatBarrier
\input{sections/climate_results_appendix}

\end{document}

%% file: math_commands.tex
\usepackage{amsmath,amsfonts,bm}

\def\eqref#1{equation~\ref{#1}}

\def\1{\bm{1}}

\DeclareMathAlphabet{\mathsfit}{\encodingdefault}{\sfdefault}{m}{sl}
\SetMathAlphabet{\mathsfit}{bold}{\encodingdefault}{\sfdefault}{bx}{n}



%% file: sections/abstract.tex
\begin{abstract}
Autoregressive neural emulators can drift or diverge over long rollouts despite accurate short-term predictions. We introduce Cascaded Emulation (CasEm), a one-way rollout architecture that augments an existing full-state backbone with an independently evolving model of physically specified aggregates. Its forecasts guide corrections to full-state predictions, without feedback from the backbone to the aggregate model. Effective guidance requires aggregates that cover substantial backbone error, remain accurately predictable, and support useful full-state corrections. We derive a finite-horizon error bound that clarifies these three factors and use empirical diagnostics to guide subsystem selection.
Across four ODE/PDE benchmarks, CasEm reduces long-horizon rollout errors across diverse backbones and suppresses the trend toward error divergence in both diffusion tasks using Fourier neural operator backbones.
In global climate emulation, CasEm with a regional total-water subsystem reduces 10-year full-state time-mean error by $66.6\%$ and $46.3\%$ for frozen ACE and Spherical DYffusion backbones, respectively, while adding less than $3\%$ to inference time.
\end{abstract}

%% file: sections/introduction.tex
\section{Introduction}
\label{sec:introduction}

Data-driven neural emulators provide an efficient alternative to conventional numerical solvers for complex dynamical systems. By learning state transitions from high-dimensional trajectories, they can generate new spatiotemporal evolutions at substantially lower cost. This paradigm now spans weather forecasting
\citep{pathak2022fourcastnet,bi2023pangu,lam2023graphcast,chen2023fuxi,price2025gencast},
molecular dynamics \citep{musaelian2023allegro}, and a broad range of partial differential equations. Their computational efficiency enables large ensembles, extreme-event statistics, and long-term response studies.

However, maintaining accuracy and stability over long prediction horizons remains a central challenge. During autoregressive rollout, the model repeatedly consumes its own predictions, allowing small errors to accumulate and amplify, potentially distorting long-term statistics, violating physical constraints, or causing numerical divergence
\citep{bonavita2024limitations,chattopadhyay2024challenges,jiang2025hine}.
Low one-step error alone consequently does not guarantee a reliable long-horizon emulator.
For example, strong short-range weather forecasting skill does not guarantee stable climate emulation or accurate climate statistics
\citep{wattmeyer2023ace,ruhlingcachay2024sphericaldyffusion}.

Reliable long-horizon emulation has been pursued through rollout-aware training \citep{brandstetter2022mppde,list2025differentiability}, dynamical and statistical objectives \citep{li2022chaotic,jiang2023invariant}, structured or generative architectures \citep{li2021fno,tripura2022wno,ruhlingcachay2023dyffusion,hu2024wdno,rozet2025latent}, and refinement or multiscale prediction \citep{lippe2023pderefiner,morel2025multiscale,jiang2025hine}. Even with these advances, a full-state emulator must represent detailed evolution while keeping large-scale or collective behavior accurate over long rollouts. In some systems, physically specified aggregates have simpler dynamics than the full state: Fourier modes of the linear heat equation evolve independently \citep{trefethen2000spectral}, while center-of-mass position and velocity form a closed subsystem in the absence of external forces \citep{goldstein2011classical}. This motivates independently modeling selected aggregates with exactly or approximately closed dynamics, whose forecasts can guide the full-state rollout.

Motivated by this observation, we adapt the one-way structure of classical cascade interconnections \citep{khalil2002nonlinear} to introduce \textbf{Cascaded Emulation (CasEm)}: a one-way rollout architecture that retains an existing full-state backbone while adding an independently evolving model of physically specified aggregates upstream. A prescribed or learned coordination map uses its forecasts to correct the backbone's full-state predictions during rollout. Because the backbone does not feed back into the aggregate model, its accumulated errors cannot directly perturb subsequent aggregate forecasts. This division of labor can reduce long-horizon drift when the aggregate forecasts are accurate and their corrections are effective, while the backbone retains fine-scale dynamics.

Turning this cascade into an effective emulator requires choosing aggregates that cover a substantial portion of the backbone's rollout error, can be predicted accurately without full-state feedback, and produce useful full-state corrections. These requirements can conflict: adding aggregate variables may make more of the backbone's rollout error correctable, but less accurate subsystem forecasts can offset that gain. We derive a finite-horizon error bound that clarifies the roles of aggregate-direction error coverage, subsystem modelability, and full-state correctability, then use the corresponding empirical diagnostics to guide subsystem selection and correction design.

Across four ODE/PDE benchmarks and large-scale climate emulation, CasEm improves long-horizon accuracy with diverse backbones and suppresses explosive error growth in several benchmark configurations. In climate emulation, a subsystem built from regional averages of total water reduces 10-year full-state time-mean error by $66.6\%$ and $46.3\%$ with frozen ACE~\citep{wattmeyer2023ace} and Spherical DYffusion (S-DY)~\citep{ruhlingcachay2024sphericaldyffusion} backbones, respectively.

Our contributions are threefold: \textbf{(a) Cascaded rollout.} We introduce one-way coupling between an independently evolving aggregate subsystem and a recurrent full-state backbone, using aggregate forecasts to correct full-state predictions. \textbf{(b) Design guidance.} We derive a finite-horizon error bound and empirical diagnostics to guide subsystem selection and correction design. \textbf{(c) Cross-system validation.} We evaluate CasEm across ODE/PDE benchmarks and large-scale climate emulation with diverse backbones.

%% file: sections/related_work.tex
\section{Related Work}
\label{sec:related_work}

\textbf{Reduced-order and latent dynamics.} Autoencoder-based approaches learn low-dimensional coordinates and model their evolution using Koopman-based, sparse, or neural dynamical models \citep{lusch2018koopman,champion2019coordinates,linot2022reduced}. These approaches reconstruct full-state forecasts from the evolved coordinates, rather than correct predictions from a separately recurrent full-state backbone. Multiscale-LED alternates latent rollout with full-system simulation, feeding re-encoded states back into the latent recurrence \citep{vlachas2022led}. CasEm instead retains a recurrent full-state backbone and uses an independently evolving, physically specified aggregate subsystem to correct its predictions.

\textbf{Coarse-to-fine and multiscale methods.} Existing frameworks that pair autoregressive coarse emulators with downscaling evolve only the coarse state, where fine-scale fields are generated at each step without propagating them forward in time \citep{lupinjimenez2025fcds,perkins2025hiroace}. HINE conditions full-state prediction on a hierarchy of predicted coarse future states \citep{jiang2025hine}, while ReLIFT uses restrict--evolve--lift propagation \citep{bassi2026relift}. CasEm instead evolves both the aggregate subsystem and full-state backbone recurrently, with one-way guidance from the former to the latter. Further comparisons of temporal dependencies appear in Appendix~\ref{app:prediction_structures}.

\textbf{Climate emulation.} As an atmospheric application of long-horizon neural emulation, climate emulation seeks to extend the success of neural weather forecasting \citep{bi2023pangu,lam2023graphcast,price2025gencast} to stable multi-year simulations with realistic climate statistics. Representative efforts include ACE, ACE2, and S-DY \citep{wattmeyer2023ace,wattmeyer2025ace2,ruhlingcachay2024sphericaldyffusion}.
ACE uses a deterministic autoregressive SFNO for stable century-scale climate simulation. ACE2 adds explicit dry-air-mass and moisture conservation \citep{wattmeyer2023ace,wattmeyer2025ace2}. S-DY couples DYffusion with SFNO to produce stable century-scale probabilistic ensembles \citep{ruhlingcachay2024sphericaldyffusion}.
Building on these pretrained emulators, CasEm keeps the full-state backbones frozen and uses an independently evolving regional-water subsystem to reduce 10-year mean-field errors for both ACE and S-DY (Section~\ref{sec:climate_emulation}).

%% file: sections/method.tex
\section{CasEm: Macroscopic Trend First, Full-State Emulation Second}
\label{sec:method}
\input{figures/casem_architecture}

\subsection{Problem setup and cascaded rollout}

Consider a discrete-time dynamical system with optional forcing or control $u_t$,
\begin{equation*}
    x_{t+1}=\mathcal F(x_t,u_t),
    \qquad x_t\in\mathbb R^N.
\end{equation*}
Physical or domain knowledge specifies a small collection of aggregate variables that serve as the subsystem state. These variables typically capture large-scale or collective features of the full state. We use the default linear notation
\begin{equation*}
    z_t=Hx_t\in\mathbb R^n,
    \qquad n\ll N.
\end{equation*}
Nonlinear readouts are also supported. Examples include weighted statistics, prescribed Fourier or spherical-harmonic modes and bands, molecular structural coordinates, and conservation-related quantities. We focus on variables for which physical prior knowledge suggests exact or approximate closure,
\begin{equation*}
    z_{t+1}
    =G_t^\star(z_t,u_t)+r_t(x_t,u_t),
    \qquad
    \lVert r_t(x_t,u_t)\rVert_2
    \leq\varepsilon_{\mathrm{cl},t}(z_t,u_t).
\end{equation*}
Here, $r_t$ captures the influence of unresolved variables: closure is exact when $r_t\equiv0$ and useful approximate closure keeps this influence small on encountered states. Appendix~\ref{app:closure_q} formalizes this intuition and bounds its rollout effect.

\noindent
\begin{minipage}[t]{0.74\linewidth}
\vspace{0pt}
This closure structure provides a basis for independent subsystem modeling within CasEm. Standard full-state emulation approximates $\mathcal F$ with a single backbone $f_\theta$. CasEm retains this backbone and adds a low-dimensional model $g_\psi$ to learn the subsystem dynamics $G_t^\star$ from data, with the aggregate readout $H$ specified by prior knowledge. The subsystem rolls out independently, and a coordination map $\mathcal C_\phi$ (prescribed or learned) combines its forecast with the backbone proposal to produce the next full-state prediction. This coupling yields the following recurrence, initialized with $\widehat x_0=x_0$ and $\widehat z_0=Hx_0$:
\begin{equation}
\left\{
\begin{aligned}
    \widehat z_{t+1}
    &=g_\psi(\widehat z_t,u_t),\\
    \widehat x_{t+1}
    &=\mathcal C_\phi\!\left(
        f_\theta(\widehat x_t,u_t),
        \widehat z_{t+1}
    \right).
\end{aligned}
\right.
\label{eq:casem_rollout}
\end{equation}

These recurrences form a standard cascade interconnection, implemented here using an upstream-first discretization. Figure~\ref{fig:casem_block_diagram} depicts the CasEm recurrence in Eq.~(\ref{eq:casem_rollout}), while Figure~\ref{fig:casem_architecture} contrasts its independent subsystem evolution with standard autoregression and HINE's joint full/coarse prediction. Prediction hats and forcing inputs are omitted in Figure~\ref{fig:casem_architecture} for clarity. Appendix~\ref{app:cascade_background} provides the formal definition and background of cascade interconnections.
\end{minipage}\hfill
\begin{minipage}[t]{0.23\linewidth}
\vspace{0pt}
\centering
\includegraphics[width=1.13in]{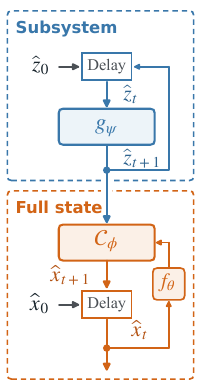}
\par\smallskip
\refstepcounter{figure}
\small Figure~\thefigure: CasEm block diagram ($u_t$ omitted).
\label{fig:casem_block_diagram}
\end{minipage}



\subsection{Mechanistic analysis}
\label{sec:theory_main}

All errors below accumulate squared deviations over the same evaluation trajectories and steps $1{:}T$, using a common normalized metric (Appendix~\ref{app:error_definitions}). Let $\mathcal E_x^{\mathrm{base}}$, $\mathcal E_x^{\mathrm{cas}}$, and $\mathcal E_x^{\mathrm{cas,orc}}$ denote the full-state errors of the uncorrected backbone, CasEm, and Oracle rollouts, respectively. Oracle uses true subsystem targets while keeping the backbone and correction map unchanged. For a fixed readout $H$, $\mathcal E_z^{\mathrm{base}}$ and $\mathcal E_z^{\mathrm{sub}}$ denote aggregate-variable errors from the base rollout and independent subsystem forecasts, respectively. For nonzero denominators, we define
\begin{equation*}
\boxed{
\rho
=\frac{\mathcal E_z^{\mathrm{base}}}{\mathcal E_x^{\mathrm{base}}},
\qquad
q
=\frac{\mathcal E_z^{\mathrm{sub}}}{\mathcal E_z^{\mathrm{base}}},
\qquad
\gamma
=\frac{\mathcal E_x^{\mathrm{base}}-\mathcal E_x^{\mathrm{cas,orc}}}
       {\mathcal E_z^{\mathrm{base}}}.
}
\end{equation*}
These quantities characterize three complementary aspects of CasEm:
\begingroup
\begin{itemize}

    \item \textbf{Aggregate-direction error coverage ($\rho$).} The fraction of the base model's full-state error captured by the selected aggregate directions, reflecting their importance to prediction error.
    \item \textbf{Modelability ($q$).} Subsystem prediction error relative to the base model's aggregate error, reflecting how accurately the subsystem can be modeled independently; $q<1$ indicates higher accuracy.
    \item \textbf{Correctability ($\gamma$).} Full-state error reduction per unit aggregate-direction error under exact subsystem targets. Positive $\gamma$ indicates beneficial correction; $\gamma=1$ matches the gain from fully compensating the original aggregate-direction error. Values $\gamma>1$ indicate additional gains in complementary directions, as aggregate correction can provide more dynamically consistent states for subsequent prediction through the coupled full-state dynamics.
\end{itemize}
\endgroup

The following theorem bounds CasEm's full-state error in terms of $\rho$, $q$, and $\gamma$.

\begin{theorem}[CasEm error bound]
\label{thm:main}
With $H$ normalized to have orthonormal rows, suppose that the correction makes the minimum change needed to satisfy the predicted aggregate constraint. If the corrected recurrence is well defined and has finite response to subsystem-target perturbations, summarized by a constant $B$, then CasEm's accumulated full-state error, normalized by the base-model error, satisfies
\begin{equation*}
\boxed{
\frac{\mathcal E_x^{\mathrm{cas}}}{\mathcal E_x^{\mathrm{base}}}
\leq
\left[
    \sqrt{1-\rho\gamma}
    +B\sqrt{\rho q}
\right]^2
+\rho q.
}
\end{equation*}
\end{theorem}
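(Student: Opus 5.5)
The plan is to compare the CasEm rollout with the Oracle rollout. The two share the backbone and the correction map and differ only in the subsystem targets they are fed. I will write each accumulated error as a squared norm over the stacked trajectory. Let $e^{\mathrm{cas}}$, $e^{\mathrm{orc}}$, $e^{\mathrm{base}}$ be the stacked full-state error sequences over steps $1{:}T$ in the common normalized metric, so that $\mathcal E_x^{\bullet}=\lVert e^{\bullet}\rVert^2$. Because $H$ has orthonormal rows, $P=H^\top H$ is an orthogonal projector. The minimum-change correction is therefore $\mathcal C(y,\widehat z)=y+H^\top(\widehat z-Hy)$. After correction, the aggregate component of $\widehat x_{t+1}$ equals $\widehat z_{t+1}$ exactly, and the complementary component $(I-P)\widehat x_{t+1}$ equals that of the backbone proposal.

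\textbf{Step 1 (split into aggregate and complementary parts).} Orthogonality gives
\[
\lVert e^{\mathrm{cas}}\rVert^2=\lVert (I-P)e^{\mathrm{cas}}\rVert^2+\lVert He^{\mathrm{cas}}\rVert^2 .
\]
Since $He^{\mathrm{cas}}_t=\widehat z_t-z_t$ by the constraint, the second term is exactly $\mathcal E_z^{\mathrm{sub}}=\rho q\,\mathcal E_x^{\mathrm{base}}$. This accounts for the additive $\rho q$ outside the bracket.

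\textbf{Step 2 (complementary part via the Oracle).} For the complementary part I apply the triangle inequality in the stacked norm:
\[
\lVert (I-P)e^{\mathrm{cas}}\rVert\le \lVert (I-P)e^{\mathrm{orc}}\rVert+\lVert (I-P)(e^{\mathrm{cas}}-e^{\mathrm{orc}})\rVert .
\]
The first term is at most $\lVert e^{\mathrm{orc}}\rVert=\sqrt{\mathcal E_x^{\mathrm{cas,orc}}}$. By the definition of $\gamma$, $\mathcal E_x^{\mathrm{cas,orc}}=\mathcal E_x^{\mathrm{base}}-\gamma\mathcal E_z^{\mathrm{base}}=(1-\rho\gamma)\mathcal E_x^{\mathrm{base}}$.

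For the second term I use the finite-response hypothesis. The CasEm and Oracle rollouts are the same recurrence driven by target sequences $\widehat z_{1:T}$ and $z_{1:T}$. I therefore interpret $B$ as the (trajectory-level, accumulated) Lipschitz constant of the map from target sequence to the complementary component of the trajectory:
\[
\lVert (I-P)(x^{\mathrm{cas}}-x^{\mathrm{orc}})\rVert\le B\,\lVert \widehat z_{1:T}-z_{1:T}\rVert=B\sqrt{\mathcal E_z^{\mathrm{sub}}}=B\sqrt{\rho q\,\mathcal E_x^{\mathrm{base}}} .
\]
Squaring the sum, dividing by $\mathcal E_x^{\mathrm{base}}$, and adding Step 1 gives the bound.

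\textbf{Main obstacle.} The delicate step is making the constant $B$ precise. The recurrence feeds the perturbed state back through the nonlinear backbone, so a per-step perturbation at time $s$ propagates to all later steps. I would define $B$ directly as the supremum, over the relevant target perturbations, of the ratio of accumulated complementary deviation to accumulated target deviation. Equivalently, $B$ is the operator norm of the discrete variational response. One bounds it, when needed, by summing products of Jacobian norms of $(I-P)f_\theta\circ\mathcal C$ along the trajectory, giving a discrete Gr\"onwall-type quantity. Well-definedness ensures both rollouts exist.

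A minor check is the normalization. The same metric and trajectory weights must be used for $\mathcal E_z$ and $\mathcal E_x$, so that $\lVert He\rVert^2$ is measured consistently with $\mathcal E_z$. Orthonormal rows of $H$ make this automatic (Appendix~\ref{app:error_definitions}). One also needs $1-\rho\gamma\ge0$, which holds because $\mathcal E_x^{\mathrm{cas,orc}}\ge0$.
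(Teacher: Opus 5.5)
Your proof is correct and is essentially the paper's argument: you split the CasEm error by Pythagoras into its aggregate part, which equals $\mathcal E_z^{\mathrm{sub}}$ because of the hard write-back, and its complementary part, which you bound by the triangle inequality using the Oracle error and $\lVert Q_H(X^{\mathrm{cas}}-X^{\mathrm{cas,orc}})\rVert_T\le B\sqrt{\mathcal E_z^{\mathrm{sub}}}$, with $B$ defined exactly as your supremum of ratios over a neighborhood of $Z^\star$; the paper also notes that the Oracle error lies entirely in the $Q_H$ subspace, so your $\lVert (I-P)e^{\mathrm{orc}}\rVert\le\lVert e^{\mathrm{orc}}\rVert$ is actually an equality. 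One small caveat: for a nonlinear recurrence that supremum of finite-perturbation ratios is not literally ``equivalent'' to the operator norm of the linearized response, but that aside plays no role in your proof.
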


The normalization, correction rule, definition of $B$, and proof are detailed in Appendix~\ref{app:formal_theory}. The bound guides practical subsystem selection: it favors a larger ideal correction gain $\rho\gamma$ and a smaller normalized subsystem error $\rho q$. Candidate aggregate variables can therefore be screened sequentially for non-negligible error coverage $\rho$, good modelability (small $q$), and high correctability $\gamma$, following the pipeline in Figure~\ref{fig:subsystem_screening} (Appendix~\ref{app:subsystem_screening}).

%% file: figures/casem_architecture.tex
\begin{figure}[!t]
    \centering
    \begin{minipage}[c]{0.62\linewidth}
        \centering
        \includegraphics[width=\linewidth]{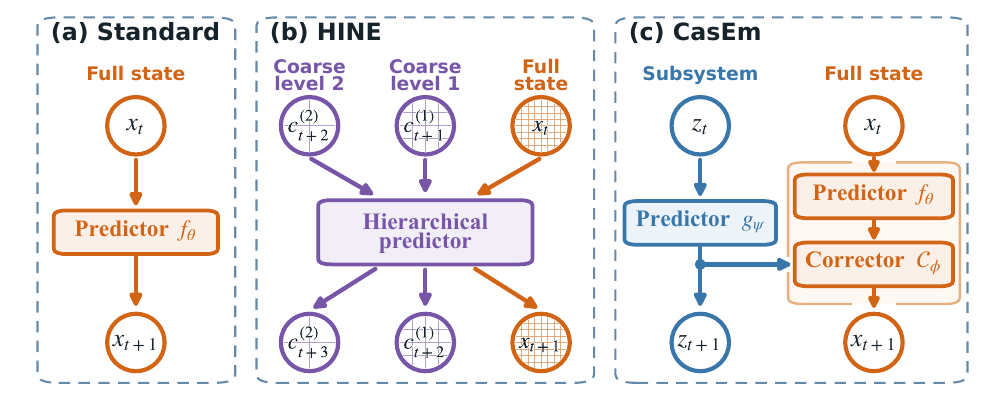}
    \end{minipage}\hfill%
    \begin{minipage}[c]{0.35\linewidth}
        \small
        \caption{One-step architectures: (a) standard autoregression; (b) HINE's joint full/coarse prediction, using prior-step predictions of spatially downsampled future states as coarse inputs; (c) CasEm's independent subsystem guiding full-state correction.}
        \label{fig:casem_architecture}
    \end{minipage}
\end{figure}

%% file: sections/experiments.tex
\section{ODE and PDE Benchmarks}
\label{sec:experiments}

We evaluate CasEm across four ODE/PDE systems and multiple backbones, alongside subsystem-selection and mechanism analyses. An additional central-force benchmark with a nonlinear aggregation map is reported in Appendix~\ref{app:central_force}. Full-state prediction accuracy is measured using normalized root mean square error (NRMSE).

\textbf{Tasks and datasets.} We begin with two diffusion-type PDEs on 1-dimensional periodic domains: the linear Heat equation and the nonlinear Fisher--KPP equation.
Heat describes heat conduction, whereas Fisher--KPP combines diffusion with nonlinear local growth to model reaction-front propagation.
We then extend the evaluation to 2-dimensional PDE dynamics
using the barotropic vorticity equation (BVE), a simplified model
of large-scale atmospheric flow on the sphere.
Finally, we move beyond field-based PDEs to a particle-based ODE
system: the Alanine benchmark, which simulates a 22-atom alanine dipeptide
at fixed particle number, volume, and energy (NVE).
Custom numerical solvers generate the Heat and Fisher--KPP trajectories, while the public simulation packages
SpeedyWeather.jl \citep{klower2024speedyweather} and OpenMM \citep{eastman2024openmm} generate the BVE and Alanine trajectories, respectively. Each prediction step advances the system by 0.05 simulation time units for Heat, 0.02 for Fisher--KPP, 3 hours for BVE, and 0.01 ps for Alanine. 

\textbf{Backbones and baselines.}
PDE backbones include residual convolutional neural networks (ResCNN), Fourier neural operators (FNO) for diffusion, and spherical Fourier neural operators (SFNO) \citep{bonev2023sfno} for BVE; Alanine uses task-adapted graph network simulator (GNS) \citep{sanchezgonzalez2020gns} and $E(n)$-equivariant graph neural network (EGNN)-velocity \citep{satorras2021egnn} backbones. Comparisons include the uncorrected Base and HINE. Unlike the PDE benchmarks, the Alanine ODE lacks a natural spatial-resolution hierarchy.
We therefore adapt HINE to condition full-state prediction on predicted future states of the same aggregate variables used by CasEm.
All baselines are trained with a rollout curriculum, starting with one-step prediction and progressively increasing the rollout length.
 Base and CasEm use identical pretrained backbone weights. We also include two diagnostic controls: Oracle and subsystem-only
rollouts.
Oracle replaces CasEm's predicted subsystem targets with their
ground-truth values.
Subsystem-only rollouts evolve only the subsystem state and
reconstruct the full state at each step using a task-specific
decoder, without a recurrent full-state model.
Decoder construction is detailed in
Appendix~\ref{app:subsystem_only_decoders}.

\textbf{Subsystem choice.} The Heat and Fisher--KPP subsystems use low-frequency Fourier coefficients, which describe the spatial mean and large-scale structure. This pair contrasts exact closure in Heat, where modes evolve independently, with approximate closure in Fisher--KPP, where nonlinear reactions couple modes. The BVE subsystem consists of the 3 real degree-1 spherical-harmonic coefficients, representing planetary-scale rotational motion. For Alanine, the subsystem comprises center-of-mass position and velocity, describing overall molecular translation decoupled from internal motion. Subsystem definitions and implementation details are provided in Appendix~\ref{app:controlled_details}.

\subsection{Overall Performance}
\label{sec:benchmark_results}
\label{sec:results}

\begin{figure}[!t]
\centering
\includegraphics[width=\linewidth]{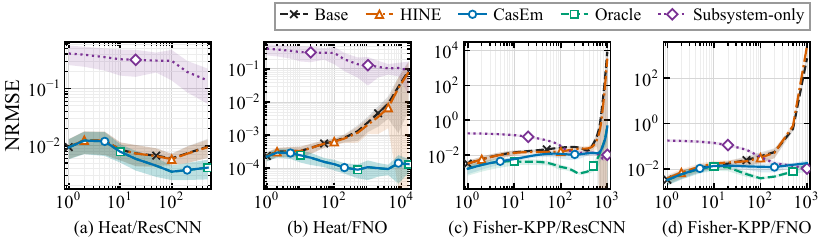}
\caption{Full-state NRMSE versus rollout steps. Lines show means; shaded bands indicate $\pm$ SD.}
\label{fig:diffusion_main}
\vspace{2mm}
\includegraphics[width=\linewidth]{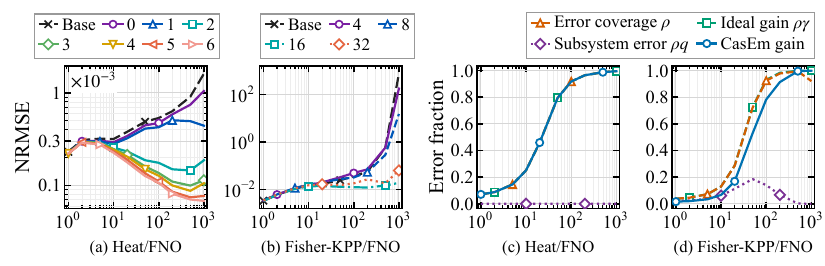}
\caption{Subsystem-size ablations and mechanism diagnostics for Heat and Fisher--KPP with FNO. (a,b) Full-state NRMSE for retained Fourier modes $0{:}K$; legends give $K$. (c,d) Cumulative diagnostics normalized by cumulative Base full-state squared error. Horizontal axes: rollout steps.}
\label{fig:subsystem_analysis}
\end{figure}

\begin{table*}[!t]
\centering
\begin{minipage}[t]{0.67\linewidth}
\vspace{0pt}
\fontsize{7.2}{8.0}\selectfont
\setlength{\tabcolsep}{1.2pt}
\renewcommand{\arraystretch}{0.91}
\begingroup
\medmuskip=0mu
\begin{tabular}{@{}c@{\hspace{1pt}}c@{\hspace{2pt}}lcccc@{}}
\toprule
& & Method & One-step & Intermediate & Long & Extended \\
\midrule
\multirow{9}{*}{\rotatebox[origin=c]{90}{BVE}} & \multirow{4}{*}{\rotatebox[origin=c]{90}{ResCNN}} & Base & $0.0117\pm0.00171$ & $0.703\pm0.163$ & $14.9\pm2.54$ & $65.7\pm7.03$ \\
& & HINE & $0.0113\pm0.00167$ & $0.677\pm0.157$ & $13.4\pm2.29$ & $63.5\pm6.76$ \\
& & CasEm & $\mathbf{0.00943\pm0.00128}$ & $\mathbf{0.488\pm0.112}$ & $\mathbf{2.17\pm0.320}$ & $\mathbf{4.96\pm3.17}$ \\
& & Oracle & $0.00943\pm0.00128$ & $0.483\pm0.115$ & $2.14\pm0.376$ & $5.19\pm3.22$ \\
\cmidrule(lr){2-7}
& \multirow{4}{*}{\rotatebox[origin=c]{90}{SFNO}} & Base & $0.0250\pm0.00312$ & $0.159\pm0.0401$ & $1.36\pm1.61$ & $4.98\pm3.99$ \\
& & HINE & $0.0241\pm0.00309$ & $0.161\pm0.0403$ & $1.39\pm1.58$ & $5.53\pm4.05$ \\
& & CasEm & $\mathbf{0.0164\pm0.00313}$ & $\mathbf{0.154\pm0.0355}$ & $\mathbf{0.773\pm0.102}$ & $\mathbf{1.72\pm2.68}$ \\
& & Oracle & $0.0164\pm0.00313$ & $0.154\pm0.0355$ & $0.772\pm0.102$ & $1.73\pm2.72$ \\
\cmidrule(lr){2-7}
& & SO & $0.113\pm0.0175$ & $1.11\pm0.0834$ & $1.11\pm0.106$ & $1.09\pm0.0772$ \\
\specialrule{\heavyrulewidth}{\aboverulesep}{\belowrulesep}
\multirow{9}{*}{\rotatebox[origin=c]{90}{Alanine}} & \multirow{4}{*}{\rotatebox[origin=c]{90}{GNS}} & Base & $0.200\pm0.0369$ & $2.18\pm0.568$ & $6.95\pm2.39$ & $32.9\pm13.4$ \\
& & HINE & $0.382\pm0.105$ & $1.889\pm0.469$
& $5.692\pm2.403$ & $95.680\pm18.833$ \\
& & CasEm & $0.199\pm0.0365$ & $\mathbf{1.26\pm0.205}$ & $\mathbf{1.28\pm0.203}$ & $\mathbf{1.27\pm0.207}$ \\
& & Oracle & $0.199\pm0.0365$ & $1.23\pm0.203$ & $1.28\pm0.158$ & $1.24\pm0.181$ \\
\cmidrule(lr){2-7}
& \multirow{4}{*}{\rotatebox[origin=c]{90}{EGNN}} & Base & $0.424\pm0.0826$ & $1.81\pm0.529$ & $5.45\pm2.60$ & $26.6\pm13.4$ \\
& & HINE & $0.444\pm0.092$ & $1.369\pm0.194$ & $6.082\pm1.855$ & $29.443\pm14.520$ \\
& & CasEm & $0.423\pm0.0822$ & $\mathbf{1.20\pm0.175}$ & $\mathbf{1.25\pm0.160}$ & $\mathbf{1.21\pm0.117}$ \\
& & Oracle & $0.423\pm0.0822$ & $1.19\pm0.155$ & $1.26\pm0.158$ & $1.23\pm0.127$ \\
\cmidrule(lr){2-7}
& & SO & $0.870\pm0.112$ & $1.03\pm0.107$ & $1.07\pm0.188$ & $3.36\pm1.89$ \\
\bottomrule
\end{tabular}
\endgroup
\end{minipage}\hfill%
\begin{minipage}[t]{0.31\linewidth}
\vspace{0pt}
\small

\caption{Full-state endpoint NRMSE for BVE and alanine dipeptide (mean $\pm$ sample SD across trajectories). The four horizons correspond to 1/\allowbreak 50/\allowbreak 500/\allowbreak 1000 steps for BVE and 1/\allowbreak 500/\allowbreak 2000/\allowbreak 10000 steps for Alanine. Within each backbone and horizon, bold highlights the lowest mean among Base, HINE, and CasEm unless differences are marginal. Oracle and backbone-independent subsystem-only (SO) rollouts serve as diagnostic references.}
\label{tab:other_controlled_results}
\end{minipage}
\end{table*}

Figure~\ref{fig:diffusion_main} and Table~\ref{tab:other_controlled_results} summarize the full-state rollout errors. Experimental details and additional results are provided in Appendix~\ref{app:controlled_details}.
The main findings are as follows:
\textbf{(a) CasEm improves long-horizon prediction across tasks and backbones.}
CasEm consistently reduces long-horizon prediction errors relative
to the corresponding Base models across the tested systems,
backbone architectures, and subsystem constructions.
Notably, for both diffusion benchmarks with FNO backbones
in Fig.~\ref{fig:diffusion_main}, CasEm suppresses the trend
toward error divergence observed in Base and HINE.
With ResCNN backbones, CasEm slows this trend even when it does
not fully prevent it.
CasEm also generally exhibits smaller sample standard deviations (SD)
than the baselines across different experimental settings,
particularly over longer rollouts, indicating more stable
prediction performance.
Backbone design remains important: FNO achieves lower long-horizon
errors than ResCNN, even when its one-step Base error is
slightly larger.
CasEm further improves both backbones, showing that subsystem
guidance complements backbone-level improvements.
\textbf{(b) Full-state fidelity and subsystem stability are complementary.}
Full-state emulators generally provide higher short-horizon accuracy,
but some Base rollouts develop explosive error growth over longer
horizons and eventually perform worse than subsystem-only predictions.
Although subsystem-only models retain only selected aspects of the
full state and cannot fully represent detailed dynamics, their
simplified low-dimensional dynamics support highly stable
long-term predictions.
This contrast supports the design rationale of CasEm: combining
the full-state emulator's ability to represent detailed dynamics
with the subsystem model's long-term stability.
\textbf{(c) HINE provides limited gains over Base in these benchmarks.} Despite its reported gains on multiscale Navier--Stokes turbulence \citep{jiang2025hine}, HINE does not consistently outperform Base on our PDE benchmarks. Our diagnostics highlight persistent aggregate drift as a major error source that hierarchical future-state conditioning alone may not prevent. On the molecular ODE, task-adapted HINE sometimes improves over Base at intermediate horizons but incurs large errors over longer rollouts. Its unnecessary feedback from full-state predictions into closed aggregate dynamics may amplify accumulating errors.

\subsection{Subsystem Choice and Mechanism Analysis}
\label{sec:subsystem_choice}

Using Heat and Fisher--KPP with FNO backbones, we examine how subsystem size and frequency content affect CasEm and interpret the gains through $\rho$, $q$, and $\gamma$. Figure~\ref{fig:subsystem_analysis}(a,b) summarizes the size ablations, and panels (c,d) provide mechanism diagnostics. Complete size-ablation results 
and frequency-band comparisons are in Appendix~\ref{app:frequency_band_ablation}. The main findings are as follows:
\textbf{(a) Larger subsystems do not necessarily yield better predictions.} Heat benefits from larger subsystems, with diminishing absolute gains beyond $K=4$. In Fisher--KPP, increasing $K$ from 16 to 32 worsens CasEm despite improving Oracle results (Appendix Table~\ref{tab:subsystem_size_fisher}). This suggests that the modeling difficulty of larger subsystems can offset their broader error coverage. \textbf{(b) Long-horizon gains depend on both correctability and modelability.} Aggregate directions capture a growing share $\rho$ of accumulated Base error over most of the rollout. The ideal gain $\rho\gamma$ closely tracks $\rho$, indicating effective correctability ($\gamma\approx1$). Heat's exactly closed subsystem has negligible normalized error $\rho q$, so CasEm nearly realizes the ideal gain. In Fisher--KPP, larger errors in the approximately closed subsystem accompany a visible actual--ideal gap, especially at intermediate horizons.

\section{Climate Emulation}
\label{sec:climate_emulation}

We next evaluate CasEm in large-scale global climate emulation, involving substantially higher-dimensional states and extensive trajectory data.

\textbf{Dataset and baselines.} The dataset comprises 11 simulations of 10 years each from the global atmospheric model FV3GFS \citep{zhou2019fv3gfs}. Outputs are sampled every 6 hours and regridded to a $1^\circ$ Gaussian grid. The evaluated state contains 34 prognostic fields, including total water at 8 vertical levels. Comparisons pair the deterministic ACE and probabilistic S-DY backbones with their respective CasEm variants, keeping both pretrained backbones frozen. Rollouts are evaluated on NVIDIA A100 GPUs; the inference times in Table~\ref{tab:climate_runtime} are measured on a single GPU.

\textbf{Subsystem choice.} The water subsystem comprises regional averages of total water at each vertical level. Its dynamics combine a fitted seasonal--trend background with linear autoregressive predictions of low-frequency residuals. An energy-based subsystem is also evaluated. Construction details and energy-subsystem results are provided in Appendices~\ref{app:climate_details} and~\ref{app:climate_energy}.

\textbf{Evaluation metrics.} Spatiotemporal (ST) NRMSE measures area-weighted trajectory errors, whereas time-mean (TM) NRMSE measures persistent climate biases through area-weighted errors in 10-year mean fields. This emphasis on mean-climate fidelity follows S-DY and established CMIP/AMIP evaluation practice \citep{ruhlingcachay2024sphericaldyffusion,lee2024pmp}. Both metrics are reported for the full state and total-water fields. Ensemble-mean NRMSE and the fair continuous ranked probability score (CRPS) additionally evaluate probabilistic time-mean predictions; definitions and normalization are provided in Appendix~\ref{app:climate_ensemble}.

\subsection{Overall Performance}

\suppressfloats[t]

\begin{table}[!htbp]
\centering
\begin{minipage}[t]{0.73\linewidth}
\vspace{0pt}
\centering
\small
\setlength{\tabcolsep}{1.5pt}
\begingroup
\medmuskip=0mu
\begin{tabular}{@{}clcccc@{}}
\toprule
& & \multicolumn{2}{c}{Full state}
  & \multicolumn{2}{c}{Total water} \\
\cmidrule(lr){3-4}\cmidrule(lr){5-6}
\multicolumn{6}{@{}l}{\textit{Rollout NRMSE}} \\
& Method & TM & ST & TM & ST \\
\midrule
\multirow{2}{*}{\rotatebox[origin=c]{90}{ACE}}
& Base  & $0.1580$ & $0.9409$ & $0.4691$ & $1.3832$ \\
& CasEm & $0.0527$(66.6\%$\uparrow$) & $0.7913$(15.9\%$\uparrow$) & $0.0840$(82.1\%$\uparrow$) & $0.8671$(37.3\%$\uparrow$) \\[-1pt]
\cmidrule(lr){2-6}
\multirow{3}{*}{\rotatebox[origin=c]{90}{S-DY}}
& Base
& $0.0785\pm0.0028$
& $0.8160\pm0.0029$
& $0.2134\pm0.0095$
& $0.9517\pm0.0097$ \\
& CasEm
& $0.0421\pm0.0016$
& $0.7929\pm0.0019$
& $0.0653\pm0.0014$
& $0.8553\pm0.0025$ \\
&
& (46.3\% $\uparrow$)
& (2.8\% $\uparrow$)
& (69.4\% $\uparrow$)
& (10.1\% $\uparrow$) \\[-1pt]
\midrule
\multicolumn{6}{@{}l}{\textit{Ensemble scores (normalized)}} \\
& Method
& \shortstack{Ensemble-mean\\NRMSE}
& Fair CRPS
& \shortstack{Ensemble-mean\\NRMSE}
& Fair CRPS \\
\midrule
\multirow{3}{*}{\rotatebox[origin=c]{90}{S-DY}}
& Base  & 0.0721 & 0.0546 & 0.2073 & 0.1782 \\
& CasEm & 0.0358 & 0.0206 & 0.0595 & 0.0367 \\
&
& (50.3\% $\uparrow$)
& (62.2\% $\uparrow$)
& (71.3\% $\uparrow$)
& (79.4\% $\uparrow$) \\
\bottomrule
\end{tabular}
\endgroup
\end{minipage}\hfill%
\begin{minipage}[t]{0.25\linewidth}
\vspace{0pt}
\small

\caption{10-year comparison of ACE, S-DY, and their water-guided CasEm variants. Both backbones are frozen. ACE: one rollout; S-DY: 25 paired members (mean $\pm$ sample SD). Ensemble scores use 10-year mean fields. Parentheses show CasEm's relative gains over the corresponding Base ($\uparrow$).}
\label{tab:climate_results}
\end{minipage}
\end{table}

\begin{wraptable}{r}{0.34\textwidth}

\small

\caption{10-year inference time (min); overhead in parentheses.}
\label{tab:climate_runtime}
\centering
\setlength{\tabcolsep}{1.5pt}
\begin{tabular}{@{}lcc@{}}
\toprule
Time & ACE & S-DY \\
\midrule
Base  & 12.14 & 37.03 \\
CasEm & 12.45\,{\scriptsize $(+2.6\%)$}
      & 37.54\,{\scriptsize $(+1.4\%)$} \\
\bottomrule
\end{tabular}

\end{wraptable}

Tables~\ref{tab:climate_results}--\ref{tab:climate_runtime} and Figure~\ref{fig:climate_annual_tm} summarize the climate-emulation results, with experimental details and additional results in Appendix~\ref{app:climate_appendix}. From these results, we draw three main observations: \textbf{(a) CasEm substantially improves long-horizon prediction accuracy.} Water-subsystem guidance reduces full-state TM errors by $66.6\%$ for ACE and $46.3\%$ for S-DY, and total-water TM errors by $82.1\%$ and $69.4\%$, respectively. CasEm also improves ensemble predictions of 10-year mean fields, reducing S-DY's full-state fair CRPS by $62.2\%$. Water-only correction further improves S-DY's non-water predictions overall (Appendix~\ref{app:climate_group_spatial}), suggesting that gains propagate through cross-variable coupling. 
\begin{figure}[t]
\centering
\includegraphics[width=0.95\linewidth]{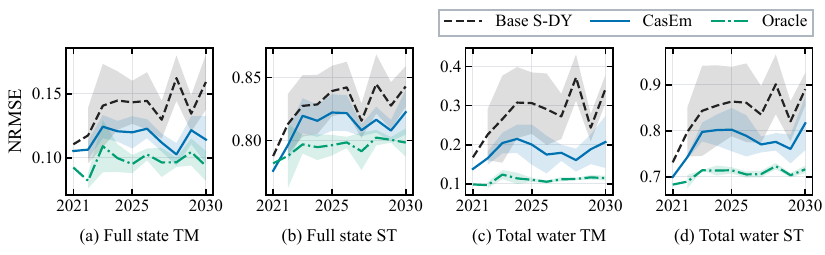}
\caption{Annual TM and ST NRMSE for 2021--2024 starts on the evaluation trajectory. Lines: means; shading: $\pm$ sample SD across starts available each year.}
\label{fig:climate_annual_tm}
\end{figure}
\textbf{(b) These gains incur little additional computational cost.} CasEm adds only $2.6\%$ and $1.4\%$ to the inference times of ACE and S-DY, respectively. Notably, ACE with CasEm achieves lower full-state ST and TM errors than uncorrected S-DY, while requiring only about one-third of its inference time. \textbf{(c) The improvements are consistent across initialization times and trajectories.} Across four initialization times on the same evaluation trajectory, Figure~\ref{fig:climate_annual_tm} shows lower mean annual TM and ST errors in every year for both the full state and total water, with generally smaller variation across starts. Both backbones improve all four metrics on each of the other 10 training trajectories, reducing mean full-state TM errors by $62.4\%$ for ACE and $43.2\%$ for S-DY (Appendix~\ref{app:climate_all11}). All 25 S-DY stochastic members likewise improve across all four rollout metrics.

\begingroup

\subsection{Regional Case Study}
\label{sec:climate_case}

Over South Asia, atmospheric moisture supply and transport play an important role in the intraseasonal variability of the Indian summer monsoon \citep{pathak2017moisture}. In the case presented here, Base S-DY produces excessively strong and frequent slow total water path (TWP) anomalies.

The regional analysis uses a 10-year rollout 
over $10$--$20^\circ$N, $65$--$95^\circ$E, a domain previously used for monsoon active--break diagnostics \citep{webster1998monsoons}, covering all June--August periods from 2021 to 2030. We evaluate slow anomalies of area-weighted TWP after removing the seasonal background and trend; processing and metric definitions are given in Appendix~\ref{app:climate_case}.

\begin{table}[!htbp]
\centering
\begin{minipage}[t]{0.73\linewidth}
\vspace{0pt}
\centering
\footnotesize
\setlength{\tabcolsep}{3pt}
\renewcommand{\arraystretch}{1.0}
\begin{tabular}{@{}lrrr@{}}
\toprule
Metric & Truth & S-DY & CasEm \\
\midrule
Slow-anomaly SD (kg m$^{-2}$) & 1.4881 & 2.7434 & 1.6826 (84.5\% $\uparrow$) \\
P95 exceedance frequency (\%) & 5.0000 & 30.5435 & 6.3043 (94.9\% $\uparrow$) \\
Regional slow RMSE (kg m$^{-2}$) & -- & 3.2665 & 1.9162 (41.3\% $\uparrow$) \\
Native-grid daily RMSE (kg m$^{-2}$) & -- & 8.7917 & 7.1828 (18.3\% $\uparrow$) \\
\midrule
Transport RMSE (kg m$^{-1}$ s$^{-1}$) & -- & 229.9891 & 194.3430 (15.5\% $\uparrow$) \\
Convergence RMSE (kg m$^{-2}$ day$^{-1}$) & -- & 6.4330 & 5.5570 (13.6\% $\uparrow$) \\
Wind vector RMSE (m s$^{-1}$) & -- & 4.6697 & 4.4292 (5.2\% $\uparrow$) \\
\bottomrule
\end{tabular}
\end{minipage}\hfill%
\begin{minipage}[t]{0.25\linewidth}
\vspace{0pt}
\small

\caption{South Asian summer diagnostics. Parentheses: relative error or bias reductions versus Base S-DY ($\uparrow$). Final three rows: unfiltered daily regional means.}
\label{tab:case_south_asia_gain}
\end{minipage}
\end{table}

\begingroup

\begin{figure}[!htbp]
\centering
\includegraphics[width=\linewidth,trim=0 12bp 0 0,clip]{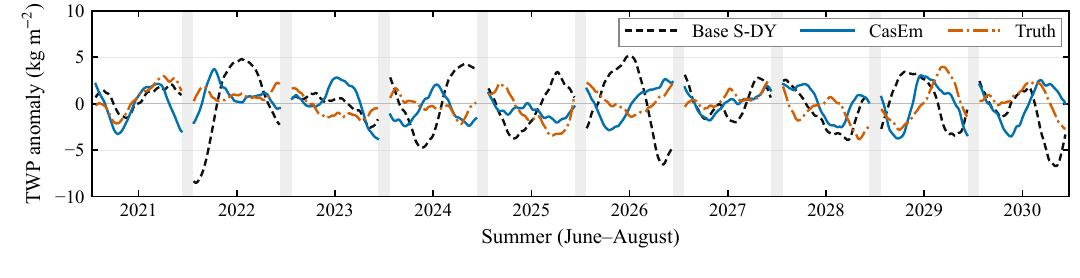}
\caption{Area-weighted slow TWP anomalies over South Asia during June--August. 
}
\label{fig:case_south_asia_jja}
\end{figure}
\endgroup
\textbf{Results.} Figure~\ref{fig:case_south_asia_jja} shows that Base S-DY produces frequent, large-amplitude excursions in regional slow TWP anomalies. CasEm substantially reduces these excessive fluctuations, bringing the anomaly trajectory closer to the reference.
Table~\ref{tab:case_south_asia_gain} quantifies this improvement: amplitude error reduced by $84.5\%$ and P95 exceedance-frequency bias by $94.9\%$. Regional slow-anomaly root mean square error (RMSE) also decreases by $41.3\%$, showing that the improvement extends beyond variability statistics to closer agreement with the reference trajectory. Daily regional total-water transport and convergence errors also decrease, showing that the improvements extend beyond water-content statistics to transport-related diagnostics. CasEm also brings regional-mean wind time series closer to the reference, showing benefits for related variables that are not directly corrected.

\FloatBarrier
\endgroup

\subsection{Ablation Studies}
\label{sec:climate_ablation}

Here, we assess whether effective subsystem guidance requires evolving dynamics and spatial structure, rather than temporal or spatial means.

\textbf{Experimental setup.} All ablations use 10-year rollouts on the same evaluation trajectory with a frozen S-DY backbone. \textbf{(a) Temporal dynamics.} Monthly, quarterly, and annual target averages test the temporal resolution required for correction. A one-quarter cyclic shift of quarterly mean targets within each year tests seasonal timing. Repeating first-year targets tests whether a single year's seasonal cycle suffices or whether modeling year-to-year evolution is necessary. \textbf{(b) Spatial structure.} We test horizontal resolution by retraining the subsystem on different target grids. We extend this resolution study to the vertical direction by aggregating the 8 layer targets into 4 adjacent-layer groups, retraining the subsystem, and applying a shared correction within each group. Finally, we test whether effective guidance depends only on the net correction or also on how it is distributed across layers. We retain the original 8-layer forecasts but sum the layerwise correction increments and redistribute the sum using a fixed mean vertical profile.

\begin{table}[!htbp]
\caption{TM error change (\%) from full CasEm ($12\!\times\!24$, 8 vertical groups); positive is worse.}
\label{tab:climate_ablation_main}
\centering
\footnotesize
\setlength{\tabcolsep}{1.5pt}
\renewcommand{\arraystretch}{1.0}
\newcommand{\climateablationrefband}[1]{{\color{black!18}\rule{3.5pt}{#1}}}
\newcommand{\climateablationbodyband}{{\color{black!18}\rule[-\dp\strutbox]{3.5pt}{\dimexpr\ht\strutbox+\dp\strutbox\relax}}}
\begin{tabular*}{\linewidth}{@{}l@{\extracolsep{\fill}}rrrrrrrcrrr@{}}
\toprule
& \multicolumn{5}{c}{Temporal} & \multicolumn{4}{c}{Horizontal space} & \multicolumn{2}{c}{Vertical space} \\
\cmidrule(lr){2-6}\cmidrule(lr){7-10}\cmidrule(lr){11-12}
Metric & \shortstack{Monthly\\mean} & \shortstack{Quarterly\\mean} & \shortstack{Annual\\mean} & \shortstack{Seasonal\\phase shift} & \shortstack{Year-1\\repeat} & $1\!\times\!1$ & $6\!\times\!12$ & \makebox[4pt][c]{\shortstack{\raisebox{3pt}[0pt][0pt]{ref.}\\[-1pt]\tikzmarknode[inner sep=0pt,outer sep=0pt]{climate-ablation-ref-head}{\phantom{\climateablationrefband{1.15ex}}}}} & $18\!\times\!36$ & \shortstack{Half\\resolution} & \shortstack{Fixed\\profile} \\
\midrule
Full-state TM & $-4.5$ & $+4.0$ & $+15.6$ & $+56.1$ & $+100.2$ & $+56.9$ & $+15.5$ & \tikzmarknode[inner sep=0pt,outer sep=0pt]{climate-ablation-ref-one}{\phantom{\climateablationbodyband}} & $-0.4$ & $+508.3$ & $+3288.6$ \\
Total-water TM & $-2.0$ & $-2.7$ & $+6.7$ & $+21.7$ & $+243.1$ & $+105.0$ & $+21.1$ & \tikzmarknode[inner sep=0pt,outer sep=0pt]{climate-ablation-ref-two}{\phantom{\climateablationbodyband}} & $+0.3$ & $+864.9$ & $+3854.6$ \\
\bottomrule
\end{tabular*}
\begin{tikzpicture}[remember picture,overlay]
  \fill[black!18] (climate-ablation-ref-head.north west) rectangle (climate-ablation-ref-two.south east);
  \draw[black,line width=\lightrulewidth]
    ([yshift={\belowrulesep+0.5\lightrulewidth}]climate-ablation-ref-one.north west) --
    ([yshift={\belowrulesep+0.5\lightrulewidth}]climate-ablation-ref-one.north east);
\end{tikzpicture}

\end{table}

\textbf{Results and analysis.} Table~\ref{tab:climate_ablation_main} summarizes relative changes in TM error; absolute ST and TM errors are reported in Appendix~\ref{app:climate_ablation}. Monthly and quarterly averaging largely preserve performance, suggesting that effective subsystem guidance relies primarily on coarse temporal structure rather than fine-scale fluctuations. However, temporal averaging cannot be made arbitrarily coarse: annual averaging increases TM errors, indicating that at least some subannual variability must be retained.
Shifting quarterly mean targets increases errors relative to the aligned quarterly means, highlighting the importance of seasonal timing. Repeating first-year targets also markedly increases errors, indicating that a fixed seasonal cycle alone does not retain the gains from year-specific evolution.
Coarser target grids degrade prediction performance, whereas refining the grid from $12\times24$ to $18\times36$ yields little additional improvement. This highlights the need to balance prediction accuracy and subsystem size, consistent with the analysis in Section~\ref{sec:subsystem_choice}.
Both halving the subsystem's vertical resolution and using a fixed vertical correction profile substantially degrade prediction performance. With the original subsystem forecasts retained, the severe degradation under fixed-profile redistribution shows that the net correction alone is far from sufficient: its allocation across layers is critical. Moisture content and the relative roles of surface exchange, turbulent mixing, and convective transport vary markedly with height \citep{beljaars2003hydrological}. Coarser vertical targets may obscure these layer-specific differences, while fixed-profile redistribution may misallocate corrections as the vertical moisture distribution evolves.

%% file: sections/limitations_conclusion.tex
\section{Conclusion and Future Work}
\label{sec:conclusion}

CasEm combines independently evolving low-dimensional subsystems with full-state backbones to improve long-horizon emulation. Our analysis provides mechanistic insight into CasEm's potential gains and offers guidance for subsystem selection. Experiments across ODE/PDE benchmarks and large-scale climate emulation demonstrate gains in long-horizon accuracy and rollout stability.
Future work will explore additional climate subsystems capturing cross-variable dynamics, organized into parallel branches or multilevel cascades.
Uncertainty-aware coordination could adapt correction strengths and balance guidance across subsystems.

%% file: sections/theory_appendix.tex
\section{Cascade Interconnections}
\label{app:cascade_background}

\subsection{Continuous-time definition}

Cascade interconnections can comprise multiple stages, with upstream states driving downstream dynamics without reverse feedback. The two-stage case, which underlies the CasEm architecture considered here, takes the following continuous-time form with upstream state $x^{(1)}$, downstream state $x^{(2)}$, and optional exogenous input $u(t)$:
\begin{equation*}
\left\{
\begin{aligned}
    \dot x^{(1)}(t)&=f^{(1)}\!\left(x^{(1)}(t),u(t)\right),\\
    \dot x^{(2)}(t)&=f^{(2)}\!\left(x^{(2)}(t),x^{(1)}(t),u(t)\right).
\end{aligned}
\right.
\end{equation*}
Its defining property, illustrated in Figure~\ref{fig:cascade_continuous}, is the triangular dependence: the downstream state $x^{(2)}$ may depend on the upstream state $x^{(1)}$, whereas the upstream dynamics do not depend on $x^{(2)}$. This is stronger than merely evaluating two modules in sequence. It is a structural restriction on their recurrent dynamics, and it prevents downstream disturbances or modeling errors from feeding back into the upstream state through the interconnection.

\input{figures/cascade_continuous}

\subsection{Upstream-first discretization}

The one-way dependence permits a partitioned time discretization that advances the upstream block before the downstream block. On a grid with step size $h$, let $\Phi_h^{(1)}$ and $\Phi_h^{(2)}$ denote the corresponding one-step maps for the two blocks. An upstream-first realization is
\begin{equation}
\left\{
\begin{aligned}
    x^{(1)}_{t+1}&=\Phi_h^{(1)}(x^{(1)}_t,u_t),\\
    x^{(2)}_{t+1}&=\Phi_h^{(2)}(x^{(2)}_t,x^{(1)}_{t+1},u_t).
\end{aligned}
\right.
\label{eq:sequential_cascade}
\end{equation}
Here $x^{(1)}_{t+1}$ is not unavailable future information: it has already been produced by the upstream update within the current time step. A simultaneous explicit discretization may instead condition the downstream update on $x^{(1)}_t$. Both conventions preserve the cascade direction; Eq.~(\ref{eq:sequential_cascade}) is the ordered convention used when the downstream state at $t+1$ should be coordinated with the upstream state at the same time. CasEm follows this convention in Eq.~(\ref{eq:casem_rollout}): $g_\psi$ advances the upstream subsystem, while $f_\theta$ and $\mathcal C_\phi$ jointly update the downstream full state.

\subsection{Applications and structural advantages}

Cascade architectures are a well-established way to organize complex systems, with classical applications in control and signal processing \citep{astrom2008feedback}. For example, high-order filters can be implemented as successive low-order stages, simplifying implementation and improving numerical robustness \citep{lyons2004understanding}.

The appeal of this structure lies in modularity: different stages can perform specialized roles, use different models, and interact through explicit interfaces. This organization supports component-wise design and analysis, allowing the behavior of a complex system to be understood through its constituent subsystems and their interconnections \citep{khalil2002nonlinear}. CasEm applies this modular structure by placing the independently evolving low-dimensional subsystem upstream of the full-state emulator.

\section{Theoretical Results and Proofs}
\label{app:formal_theory}
\label{app:formal_setting}
\label{app:formal_theorem}

This appendix gives the mathematical details and proof of Theorem~\ref{thm:main}, an interpretation of its error bound, and an analysis of subsystem rollout error under approximate closure.

\subsection{Theoretical setup}
\label{app:theory_setup}
\label{app:error_definitions}
\label{app:hard_writeback}
\label{app:key_ratios}
\label{app:complement_sensitivity}

The notation follows Section~\ref{sec:method}. Here we give explicit mathematical forms for the accumulated errors, normalization, correction rule, and sensitivity coefficient used in Theorem~\ref{thm:main}.

\paragraph{Notation and normalization.}
We evaluate $M$ reference trajectories over a horizon $T$. For each reference trajectory, the base, CasEm, and Oracle rollouts start from its initial state and use its external forcing. States are expressed in fixed normalized coordinates so that the selected full-state metric is Euclidean; any fixed variable or spatial weights are absorbed into these coordinates.

The orthonormal-row condition ($HH^\top=I_n$) in
Theorem~\ref{thm:main} can be satisfied by transforming
any full-row-rank readout $\bar H$ as
\begin{equation*}
    H=(\bar H\bar H^\top)^{-1/2}\bar H.
\end{equation*}
The same transformation is applied to true and predicted
subsystem targets. The matrices
\begin{equation*}
    P_H=H^\top H,
    \qquad
    Q_H=I_N-P_H
\end{equation*}
are the orthogonal projections onto the aggregate directions and their complementary subspace, respectively. These operators act pointwise on trajectory collections.

\paragraph{Accumulated errors and diagnostic ratios.}
Let $X^\star=\{x_{i,t}\}_{i,t}$ collect the reference trajectories, and let $X^{\mathrm{base}}$, $X^{\mathrm{cas}}$, and $X^{\mathrm{cas,orc}}$ collect the base, CasEm, and Oracle rollouts described in Section~\ref{sec:theory_main}. Write $Z^\star=HX^\star$ for the true subsystem targets and $\widehat Z$ for the independent subsystem forecasts. For any trajectory collection $A=\{a_{i,t}\}_{i,t}$, define
\begin{equation*}
    \|A\|_T^2
    :=
    \sum_{i=1}^{M}\sum_{t=1}^{T}\|a_{i,t}\|_2^2.
\end{equation*}
The accumulated errors introduced in Section~\ref{sec:theory_main} then have the explicit forms
\begin{equation*}
\begin{aligned}
    \mathcal E_x^{\mathrm{base}}
    &=\|X^{\mathrm{base}}-X^\star\|_T^2,\\
    \mathcal E_x^{\mathrm{cas}}
    &=\|X^{\mathrm{cas}}-X^\star\|_T^2,\\
    \mathcal E_x^{\mathrm{cas,orc}}
    &=\|X^{\mathrm{cas,orc}}-X^\star\|_T^2,\\
    \mathcal E_z^{\mathrm{base}}
    &=\|HX^{\mathrm{base}}-Z^\star\|_T^2,\\
    \mathcal E_z^{\mathrm{sub}}
    &=\|\widehat Z-Z^\star\|_T^2.
\end{aligned}
\end{equation*}
All errors use equal trajectory and time weights. Dividing them by the common factor $MT$ leaves the diagnostic ratios unchanged. Errors are accumulated before ratios are formed, rather than averaging per-step ratios.

For $\mathcal E_x^{\mathrm{base}}>0$ and $\mathcal E_z^{\mathrm{base}}>0$, the ratios $\rho,q,\gamma$ defined in Section~\ref{sec:theory_main} satisfy
\begin{equation*}
    \frac{\mathcal E_z^{\mathrm{sub}}}{\mathcal E_x^{\mathrm{base}}}
    =\rho q,
    \qquad
    \frac{\mathcal E_x^{\mathrm{cas,orc}}}{\mathcal E_x^{\mathrm{base}}}
    =1-\rho\gamma.
\end{equation*}
In particular, nonnegativity of the Oracle error implies $\rho\gamma\leq1$. If $\mathcal E_z^{\mathrm{base}}=0<\mathcal E_x^{\mathrm{base}}$, $q$ and $\gamma$ are undefined; the corresponding identifiable quantities are
\begin{equation*}
    \frac{\mathcal E_z^{\mathrm{sub}}}{\mathcal E_x^{\mathrm{base}}},
    \qquad
    1-\frac{\mathcal E_x^{\mathrm{cas,orc}}}{\mathcal E_x^{\mathrm{base}}}.
\end{equation*}
If $\mathcal E_x^{\mathrm{base}}=0$, only the unnormalized error bound below applies.

\paragraph{Rollout sensitivity.}
The constant $B$ in Theorem~\ref{thm:main} bounds the amplification of subsystem-target errors into complementary directions during rollout. For a fixed backbone and correction map $\mathcal C_H$, specified below, supplying a target collection $Z=\{\zeta_{i,t}\}_{i,t}$ produces the recurrence
\begin{equation*}
    \widehat x_{i,0}^{H,Z}=x_{i,0},
    \qquad
    \widehat x_{i,t}^{H,Z}
    =
    \mathcal C_H\!\left(
        f_\theta(\widehat x_{i,t-1}^{H,Z},u_{i,t-1}),
        \zeta_{i,t}
    \right).
\end{equation*}
Denote the resulting map from targets to full-state trajectories by
\begin{equation*}
    \Phi_{H,T}(Z)
    =
    X^{H,Z}
    =
    \{\widehat x_{i,t}^{H,Z}\}_{i,t}.
\end{equation*}
The actual and Oracle rollouts therefore satisfy
\begin{equation*}
    X^{\mathrm{cas}}=\Phi_{H,T}(\widehat Z),
    \qquad
    X^{\mathrm{cas,orc}}=\Phi_{H,T}(Z^\star).
\end{equation*}

Choose $r>0$ such that $\|\widehat Z-Z^\star\|_T\leq r$, and suppose the corrected recurrence is well defined for all target collections in this neighborhood. Define
\begin{equation}
B_H(T;r)
:=
\sup_{0<\|Z-Z^\star\|_T\leq r}
\frac{
    \left\|Q_H\!\left[
        \Phi_{H,T}(Z)-\Phi_{H,T}(Z^\star)
    \right]\right\|_T
}{
    \|Z-Z^\star\|_T
}.
\label{eq:B_definition}
\end{equation}
Thus $B_H(T;r)\sqrt{\mathcal E_z^{\mathrm{sub}}}$ bounds the difference between the CasEm and Oracle rollouts in complementary directions. Theorem~\ref{thm:main} assumes $B=B_H(T;r)<\infty$; its magnitude depends on the corrected dynamics.

\paragraph{Correction rule.}
Theorem~\ref{thm:main} establishes the bound for linear readouts with exact minimum-change correction, a specific instance of CasEm's general coordination scheme. Under $HH^\top=I_n$, its explicit form is
\begin{equation*}
\begin{aligned}
    \mathcal C_H(\widetilde x,\zeta)
    &=
    \operatorname*{arg\,min}_{y:\,Hy=\zeta}
    \|y-\widetilde x\|_2^2\\
    &=
    \widetilde x+H^\top(\zeta-H\widetilde x)\\
    &=
    Q_H\widetilde x+H^\top\zeta.
\end{aligned}
\end{equation*}
Consequently,
\begin{equation*}
    H\mathcal C_H(\widetilde x,\zeta)=\zeta,
    \qquad
    Q_H\mathcal C_H(\widetilde x,\zeta)=Q_H\widetilde x.
\end{equation*}
For a target $\bar\zeta$ in the original aggregate coordinates, the same correction can be written as
\begin{equation*}
    \widetilde x+\bar H^\dagger(\bar\zeta-\bar H\widetilde x),
    \qquad
    \zeta=(\bar H\bar H^\top)^{-1/2}\bar\zeta.
\end{equation*}
Whitening therefore changes the aggregate coordinates, not the full-state correction.

Although this correction preserves the complementary component of each proposal, the corrected state enters subsequent backbone predictions. Subsystem-target errors can therefore affect complementary directions at later steps, which is why the rollout sensitivity $B$ remains necessary.

\subsection{Proof of Theorem 1}

For completeness, we restate the precise version of Theorem~\ref{thm:main}. If $\|\widehat Z-Z^\star\|_T\leq r$ and $B_H(T;r)<\infty$, then
\begin{equation}
\boxed{
\mathcal E_x^{\mathrm{cas}}
\leq
\left[
    \sqrt{\mathcal E_x^{\mathrm{cas,orc}}}
    +B_H(T;r)\sqrt{\mathcal E_z^{\mathrm{sub}}}
\right]^2
+\mathcal E_z^{\mathrm{sub}}.
}
\label{eq:unnormalized_bound}
\end{equation}
For positive denominators, equivalently,
\begin{equation}
\boxed{
\frac{\mathcal E_x^{\mathrm{cas}}}{\mathcal E_x^{\mathrm{base}}}
\leq
\left[
    \sqrt{1-\rho\gamma}
    +B_H(T;r)\sqrt{\rho q}
\right]^2
+\rho q.
}
\label{eq:normalized_bound_appendix}
\end{equation}

\begin{proof}[\normalfont\bfseries Proof of Theorem~\ref{thm:main}]

Let the target-sequence error and the learned--Oracle trajectory difference be
\begin{equation*}
    \Delta Z=\widehat Z-Z^\star,
    \qquad
    \Delta X=X^{\mathrm{cas}}-X^{\mathrm{cas,orc}}.
\end{equation*}
Let the Oracle Cascade trajectory error and the learned-target-induced complement difference be
\begin{equation*}
    e^{\mathrm{cas,orc}}=X^{\mathrm{cas,orc}}-X^\star,
    \qquad
    D=Q_H\Delta X.
\end{equation*}
Because the Oracle Cascade writes back the true target at every step,
\begin{equation*}
    HX^{\mathrm{cas,orc}}=Z^\star=HX^\star,
\end{equation*}
which gives
\begin{equation*}
    He^{\mathrm{cas,orc}}=0,
    \qquad
    e^{\mathrm{cas,orc}}=Q_H e^{\mathrm{cas,orc}}.
\end{equation*}
The learned Cascade writes back $\widehat Z$, and therefore
\begin{equation*}
    H\Delta X
    =HX^{\mathrm{cas}}-HX^{\mathrm{cas,orc}}
    =\widehat Z-Z^\star
    =\Delta Z.
\end{equation*}
Using the orthogonal $P_H/Q_H$ decomposition,
\begin{align*}
    \Delta X
    &=Q_H\Delta X+P_H\Delta X\\
    &=D+H^\top H\Delta X\\
    &=D+H^\top\Delta Z.
\end{align*}
Thus the learned Cascade error is
\begin{align*}
    X^{\mathrm{cas}}-X^\star
    &=\left(X^{\mathrm{cas,orc}}-X^\star\right)
      +\left(X^{\mathrm{cas}}-X^{\mathrm{cas,orc}}\right)\\
    &=e^{\mathrm{cas,orc}}+D+H^\top\Delta Z.
\end{align*}
Both $e^{\mathrm{cas,orc}}$ and $D$ lie in the $Q_H$ trajectory subspace, while $H^\top\Delta Z$ lies in the orthogonal $P_H$ trajectory subspace. Hence
\begin{equation*}
    \left\langle e^{\mathrm{cas,orc}}+D,H^\top\Delta Z\right\rangle_T=0.
\end{equation*}
The row-orthogonality assumption $HH^\top=I$ also makes $H^\top$ norm-preserving on target space:
\begin{equation*}
    \|H^\top\Delta Z\|_T^2
    =\langle\Delta Z,HH^\top\Delta Z\rangle_T
    =\|\Delta Z\|_T^2.
\end{equation*}
Pythagoras therefore yields
\begin{equation}
    \mathcal E_x^{\mathrm{cas}}
    =\|e^{\mathrm{cas,orc}}+D\|_T^2
     +\|\Delta Z\|_T^2.
    \label{eq:pythagorean_decomposition}
\end{equation}
The triangle inequality gives
\begin{equation*}
    \|e^{\mathrm{cas,orc}}+D\|_T
    \leq
    \|e^{\mathrm{cas,orc}}\|_T+\|D\|_T.
\end{equation*}
By definition,
\begin{equation*}
    \|e^{\mathrm{cas,orc}}\|_T
    =\sqrt{\mathcal E_x^{\mathrm{cas,orc}}}.
\end{equation*}
If $\widehat Z=Z^\star$, then $D=0$. Otherwise, since $\|\widehat Z-Z^\star\|_T\leq r$, substituting $Z=\widehat Z$ into Eq.~(\ref{eq:B_definition}) gives the following bound, which holds in either case:
\begin{align*}
    \|D\|_T
    &=\left\|Q_H\left[
        \Phi_{H,T}(\widehat Z)-\Phi_{H,T}(Z^\star)
      \right]\right\|_T\\
    &\leq B_H(T;r)\|\widehat Z-Z^\star\|_T\\
    &=B_H(T;r)\sqrt{\mathcal E_z^{\mathrm{sub}}}.
\end{align*}
Consequently,
\begin{equation*}
    \|e^{\mathrm{cas,orc}}+D\|_T
    \leq
    \sqrt{\mathcal E_x^{\mathrm{cas,orc}}}
    +B_H(T;r)\sqrt{\mathcal E_z^{\mathrm{sub}}}.
\end{equation*}
Squaring this inequality and substituting it into Eq.~(\ref{eq:pythagorean_decomposition}) proves Eq.~(\ref{eq:unnormalized_bound}). For positive denominators, dividing by $\mathcal E_x^{\mathrm{base}}$ and using
\begin{equation*}
    \frac{\mathcal E_x^{\mathrm{cas,orc}}}{\mathcal E_x^{\mathrm{base}}}
    =1-\rho\gamma,
    \qquad
    \frac{\mathcal E_z^{\mathrm{sub}}}{\mathcal E_x^{\mathrm{base}}}
    =\rho q
\end{equation*}
proves Eq.~(\ref{eq:normalized_bound_appendix}).
\end{proof}

\subsection{Interpretation of Theorem 1}

The bound compares the benefit of exact subsystem guidance with the cost of using imperfect subsystem predictions. Expanding it gives
\begin{equation*}
\frac{\mathcal E_x^{\mathrm{cas}}}{\mathcal E_x^{\mathrm{base}}}
\leq
\underbrace{1-\rho\gamma}_{\text{ideal corrected error}}
+\underbrace{\rho q}_{\text{direct subsystem error}}
+\underbrace{2B\sqrt{\rho q(1-\rho\gamma)}+B^2\rho q}_{\text{propagation cost}}.
\end{equation*}
Thus, a sufficient condition for improvement is that the
ideal correction gain $\rho\gamma$ exceeds the direct
subsystem error and its propagation cost.

\paragraph{Simplified conditions.}
Without target-error propagation into complementary directions ($B=0$), this condition reduces to $q<\gamma$ for $\rho>0$: subsystem prediction error must be smaller than the improvement delivered by exact-target correction. If exact-target correction eliminates that aggregate error without changing the base model's complementary rollout, then $\gamma=1$, and the condition further simplifies to $q<1$, meaning that the subsystem prediction error is smaller than the base model's error in the selected aggregate directions.

\paragraph{Effect of error propagation.}
With nonzero propagation, greater subsystem accuracy is needed to compensate for downstream sensitivity. Importantly, a small propagation contribution does not require a small $B$: sufficiently accurate subsystem targets can limit the propagated error even when the downstream response is appreciable. A separate Heat/FNO diagnostic illustrates this mechanism. The observed response is of order one, but the subsystem-target error is sufficiently small that the corresponding propagation contribution is negligible relative to the gain margin. Direct diagnostics also show that exact-target correction
nearly eliminates aggregate-direction error while leaving
the complementary component of the rollout close to that
of the base model. The full-state error reduction is therefore
approximately equal to the base model's aggregate-direction
error, yielding $\gamma\approx1$. These observations support the mechanism interpretation but do not establish a uniform upper bound on $B$.

\subsection{Approximate closure and subsystem rollout error}
\label{app:closure_q}

Recall from Section~\ref{sec:method} that the true aggregate variables satisfy an approximately closed relation with a state-dependent residual. We now make the closure radius precise and show how closure, dynamics-learning error, and autonomous amplification jointly control subsystem rollout. Over the relevant reachable region $\Omega_t$, define the fiber associated with an aggregate state $z$ by
\begin{equation*}
    \mathfrak F_t(z)=\{x\in\Omega_t:Hx=z\},
\end{equation*}
and the state-dependent closure radius by
\begin{equation*}
    \varepsilon_{\mathrm{cl},t}(z,u)
    :=
    \inf_v\sup_{x\in\mathfrak F_t(z)}
    \left\|H\mathcal F(x,u)-v\right\|_2.
\end{equation*}
For the states and inputs considered, assume that each fiber is nonempty and its projected image $\{H\mathcal F(x,u):x\in\mathfrak F_t(z)\}$ is bounded. The infimum is then attained in $\mathbb R^n$; let $G_t^\star(z,u)$ be a minimizing center. Along a true trajectory,
\begin{equation*}
    z_{t+1}=G_t^\star(z_t,u_t)+r_t(x_t,u_t),
    \qquad
    \|r_t(x_t,u_t)\|_2
    \leq
    \varepsilon_{\mathrm{cl},t}(z_t,u_t).
\end{equation*}
To expose the source of this residual, suppose the projected dynamics within a common fiber obey
\begin{equation*}
    \left\|H\mathcal F(x,u)-H\mathcal F(x',u)\right\|_2
    \leq
    \kappa_{\mathrm{cl},t}(z,u)
    \left\|Q_H(x-x')\right\|_2,
    \qquad x,x'\in\mathfrak F_t(z).
\end{equation*}
Define the unresolved fiber spread
\begin{equation*}
    d_{Q,t}(z)
    :=
    \sup_{x,x'\in\mathfrak F_t(z)}
    \left\|Q_H(x-x')\right\|_2.
\end{equation*}
Then
\begin{equation*}
    \varepsilon_{\mathrm{cl},t}(z,u)
    \leq
    \kappa_{\mathrm{cl},t}(z,u)d_{Q,t}(z).
\end{equation*}
Thus approximate closure is governed jointly by unresolved-state spread and its coupling into the aggregate dynamics, rather than by a state-independent absolute residual.

\begin{theorem}[Subsystem error under approximate closure]
\label{thm:approx_closure}
Suppose the learned subsystem satisfies
\begin{equation*}
    \widehat z_{t+1}=g_\psi(\widehat z_t,u_t),
\end{equation*}
and define the pointwise dynamics-learning error
\begin{equation*}
    \delta_{\mathrm{dyn},t}(z,u)
    :=
    \left\|g_\psi(z,u)-G_t^\star(z,u)\right\|_2.
\end{equation*}
If, for each $t$, $g_\psi(\cdot,u_t)$ is $L_{z,t}$-Lipschitz on a domain containing both $z_t$ and $\widehat z_t$, then the subsystem rollout error $\eta_t=\widehat z_t-z_t$ obeys
\begin{equation}
\boxed{
\begin{aligned}
\|\eta_t\|_2
&\leq
\Gamma_z(t,0)\|\eta_0\|_2\\
&\quad+\sum_{k<t}\Gamma_z(t,k+1)
\left[
\delta_{\mathrm{dyn},k}(z_k,u_k)
+\kappa_{\mathrm{cl},k}(z_k,u_k)d_{Q,k}(z_k)
\right],
\end{aligned}
}
\label{eq:approx_closure_bound}
\end{equation}
where
\begin{equation*}
    \Gamma_z(t,s)=\prod_{\ell=s}^{t-1}L_{z,\ell}.
\end{equation*}
\end{theorem}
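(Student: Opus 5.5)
The plan is a one-step error recursion for $\eta_t=\widehat z_t-z_t$, followed by a discrete Grönwall-type unrolling. The true subsystem evolves as $z_{k+1}=G_k^\star(z_k,u_k)+r_k(x_k,u_k)$ and the learned one as $\widehat z_{k+1}=g_\psi(\widehat z_k,u_k)$. Adding and subtracting $g_\psi(z_k,u_k)$ splits the one-step error into three pieces:
\begin{equation*}
\eta_{k+1}
=\bigl[g_\psi(\widehat z_k,u_k)-g_\psi(z_k,u_k)\bigr]
+\bigl[g_\psi(z_k,u_k)-G_k^\star(z_k,u_k)\bigr]
-r_k(x_k,u_k).
\end{equation*}

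Each piece is bounded separately.
\begin{itemize}
\item The first term is at most $L_{z,k}\|\eta_k\|_2$, by the Lipschitz hypothesis on a domain containing both $z_k$ and $\widehat z_k$.
\item The second term is exactly $\delta_{\mathrm{dyn},k}(z_k,u_k)$ by definition.
\item The third term is at most $\varepsilon_{\mathrm{cl},k}(z_k,u_k)$.
\end{itemize}
The third bound holds because $x_k\in\mathfrak F_k(z_k)$: the true state lies in the reachable region and satisfies $Hx_k=z_k$. Since $G_k^\star(z_k,u_k)$ is a minimizing center, $\|H\mathcal F(x_k,u_k)-G_k^\star\|_2\le\varepsilon_{\mathrm{cl},k}$.

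The closure radius is then replaced by $\kappa_{\mathrm{cl},k}d_{Q,k}$. Fix any reference $x'\in\mathfrak F_k(z_k)$ and take the candidate center $v=H\mathcal F(x',u_k)$. The fiber Lipschitz assumption gives, for every $x$ in the fiber,
\begin{equation*}
\|H\mathcal F(x,u_k)-v\|_2\le\kappa_{\mathrm{cl},k}\|Q_H(x-x')\|_2\le\kappa_{\mathrm{cl},k}d_{Q,k}(z_k).
\end{equation*}
Taking the sup over $x$ and then the inf over $v$ yields $\varepsilon_{\mathrm{cl},k}\le\kappa_{\mathrm{cl},k}d_{Q,k}$, as asserted before the theorem. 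This gives the scalar recursion
\begin{equation*}
a_{k+1}\le L_{z,k}a_k+b_k,
\qquad
a_k=\|\eta_k\|_2,
\qquad
b_k=\delta_{\mathrm{dyn},k}(z_k,u_k)+\kappa_{\mathrm{cl},k}(z_k,u_k)\,d_{Q,k}(z_k).
\end{equation*}

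The recursion is unrolled by induction on $t$. The base case $t=0$ is trivial, since $\Gamma_z(0,0)$ is the empty product, equal to $1$. For the inductive step, multiply the bound for $a_t$ by $L_{z,t}\ge0$ and add $b_t$. Then use $L_{z,t}\Gamma_z(t,s)=\Gamma_z(t+1,s)$ and $\Gamma_z(t+1,t+1)=1$, which gives exactly Eq.~(\ref{eq:approx_closure_bound}).

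No step is deep. The main care point is the Lipschitz domain condition, which must be applied at each step to the pair $(z_k,\widehat z_k)$; the hypothesis is stated per $t$ precisely for this. A second care point is ensuring that the true states $x_k$ lie in $\Omega_k$, so that the closure radius applies along the trajectory. I take this as part of the definition of the reachable region.
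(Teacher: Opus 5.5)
Your proposal is correct and follows the same route as the paper's proof. You add and subtract $g_\psi(z_t,u_t)$, bound the three resulting terms by $L_{z,t}\|\eta_t\|_2$, $\delta_{\mathrm{dyn},t}$, and $\kappa_{\mathrm{cl},t}d_{Q,t}$, and then iterate the scalar recursion; along the way you also spell out details the paper leaves implicit, namely the derivation of $\varepsilon_{\mathrm{cl},t}\le\kappa_{\mathrm{cl},t}d_{Q,t}$, the requirement $x_t\in\Omega_t$, and the induction with the empty-product convention.
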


\begin{proof}
Adding and subtracting $g_\psi(z_t,u_t)$ and applying the Lipschitz condition, the pointwise dynamics-learning error, and the fiber closure-gain bound gives
\begin{equation*}
    \|\eta_{t+1}\|_2
    \leq
    L_{z,t}\|\eta_t\|_2
    +\delta_{\mathrm{dyn},t}(z_t,u_t)
    +\kappa_{\mathrm{cl},t}(z_t,u_t)d_{Q,t}(z_t).
\end{equation*}
Iterating this recursion yields Eq.~(\ref{eq:approx_closure_bound}).
\end{proof}

Theorem~\ref{thm:approx_closure} is not a premise of Theorem~\ref{thm:main}. Theorem~\ref{thm:main} may use the empirically measured relative subsystem prediction error $q$ directly; the present result only provides an optional analytical bound derived from closure, dynamics-learning error, and autonomous stability. Taking the supremum of $\kappa_{\mathrm{cl},t}(z,u)d_{Q,t}(z)$ over the relevant states and inputs gives a uniform bound on the closure-residual term at each time $t$. Exact closure removes the closure forcing, but not dynamics-learning error or autonomous amplification through the factors $\Gamma_z$.

\clearpage
\section{Supplementary Methodological Details}
\label{app:method_comparison}

\subsection{Subsystem selection}
\label{app:subsystem_screening}

This section expands the diagnostics in
Section~\ref{sec:theory_main} into practical guidance for
selecting subsystem variables, modeling their dynamics,
and designing corrections, as summarized in
Figure~\ref{fig:subsystem_screening}.

\paragraph{(a) Error coverage $\rho$: identifying promising correction directions.}
Estimate $\rho$ from Base rollouts to identify aggregate
directions that account for an appreciable share of
accumulated prediction error. If coverage is limited,
reconsider the selected variables, regional partition,
or spectral bands.

\paragraph{(b) Modelability $q$: obtaining reliable forecasts with compact models.}
Prefer low-complexity models that adequately capture the
subsystem dynamics, such as a compact neural network or,
when compatible with those dynamics, a Schur-stable linear
model. Model capacity and stability constraints should be
assessed through autonomous multistep forecasts rather than
one-step fitting accuracy alone. If $q$ remains large,
consider both improving the predictor and revising the
subsystem variables; increasing network size is not the
only route to better modelability.

\paragraph{(c) Correctability $\gamma$: designing effective full-state corrections.}
For a fixed aggregate readout and backbone, compare candidate
correction maps through Oracle rollouts with true subsystem
targets. Options include uniform or region-dependent gains,
layer-dependent gains, and different allocations of correction
increments. Neural networks can learn more complex
state-dependent correction relationships. When multiple
correction patterns are compatible with the same subsystem
target, conditional generative models can represent their
distribution, including possible multimodal structure.
Our climate-emulation experiments provide an example through
the diffusion-based energy correction, which generates spatial
temperature adjustments conditioned on the energy mismatch,
external forcing, and relevant full-state fields.
Evaluate correction maps by their effects on full-state
rollout error rather than only their immediate agreement
with the aggregate targets. A small $\gamma$ may therefore
motivate revising the correction map rather than immediately
discarding the candidate subsystem.

Candidate composition and dimension should therefore be
chosen by balancing error coverage and modelability,
rather than maximizing coverage alone.
All diagnostic comparisons use the same development
trajectories, horizon, and normalized metric. These diagnostics
provide qualitative guidance rather than universal acceptance
thresholds. Shortlisted designs are then evaluated through
actual CasEm rollouts with predicted subsystem targets to
assess their realized full-state gains.

\begin{figure}[!ht]
    \centering
    \includegraphics[width=\textwidth]{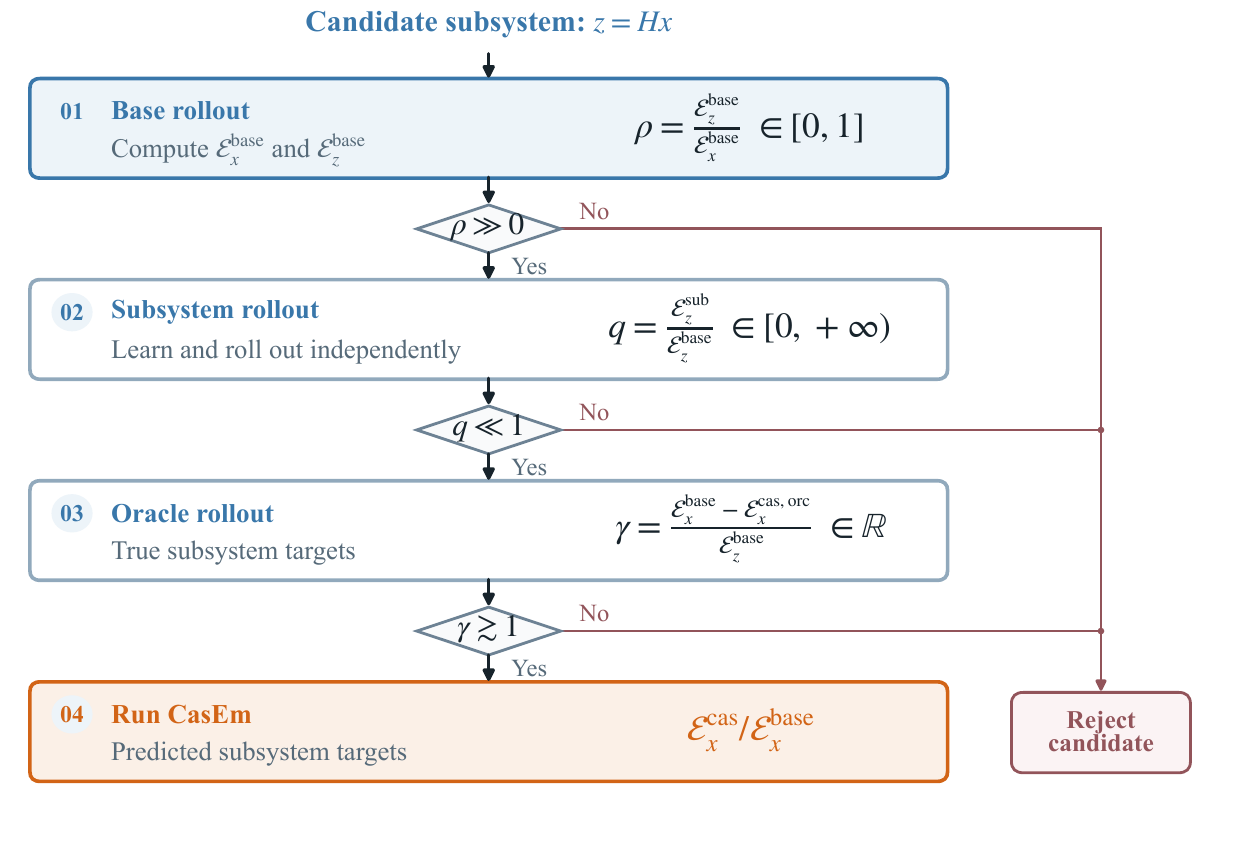}
    \caption{Subsystem screening and CasEm validation.
    The qualitative criterion $\gamma\gtrsim1$ allows values
    slightly below one but requires positive ideal correction
    gain ($\gamma>0$).}
    \label{fig:subsystem_screening}
\end{figure}
\subsection{Comparison of prediction structures}
\label{app:prediction_structures}

This section supplements the comparison of coarse-to-fine and non-autoregressive methods in Section~\ref{sec:related_work}.
Figure~\ref{fig:temporal_state_dependencies} compares state dependencies across multiple physical times, distinguishing temporal recurrence from spatial or spatiotemporal refinement.

\paragraph{(a) Coarse-scale rollout with downscaling.}
In panel (a), temporal recurrence operates on the coarse-grid state, while each fine-scale output is generated from the corresponding coarse state rather than advanced from the preceding fine-scale state \citep{lupinjimenez2025fcds,perkins2025hiroace}. Unlike CasEm's selected aggregate variables, this coarse state represents the complete system at reduced spatial resolution.

\paragraph{(b) Non-autoregressive sequence generation.}
In panel (b), multiple physical times are generated jointly. In Spatiotemporal Pyramid Flows (SPF) \citep{irvin2026pyramid}, successive refinement stages increase the sequence's spatial and temporal resolution rather than advance the system from one physical time to the next. These refinement stages therefore should not be interpreted as autoregressive time steps.

\paragraph{(c) Cascaded full-state prediction.}
In panel (c), the subsystem and full-state emulator each retain a temporal recurrence, with guidance flowing only from the subsystem to the full state. The corrected full state enters the next backbone step, distinguishing CasEm from both per-time-step downscaling and joint sequence generation.

Predicting multiple time steps at once does not by itself make a rollout non-autoregressive. If predicted states condition subsequent prediction windows, the rollout remains blockwise autoregressive, as in DYffusion beyond its training horizon \citep{ruhlingcachay2023dyffusion}.

\begin{figure}[!ht]
    \centering
    \includegraphics[width=\linewidth]{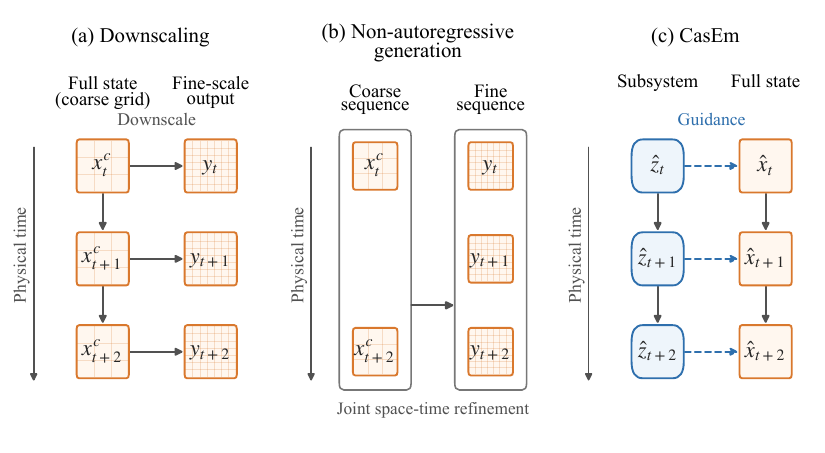}
    \caption{Temporal state dependencies: (a) coarse-grid full-state rollout with per-time-step downscaling; (b) joint sequence generation through spatiotemporal refinement; (c) CasEm's autonomous subsystem guiding recurrent full-state prediction. Physical time runs downward. Horizontal arrows denote downscaling in (a), whole-sequence refinement in (b), and subsystem guidance in (c); the refinement arrow in (b) is not a physical-time update.}
    \label{fig:temporal_state_dependencies}
\end{figure}

%% file: figures/cascade_continuous.tex
\begin{figure}[htbp]
    \centering
    \includegraphics[width=0.90\linewidth]{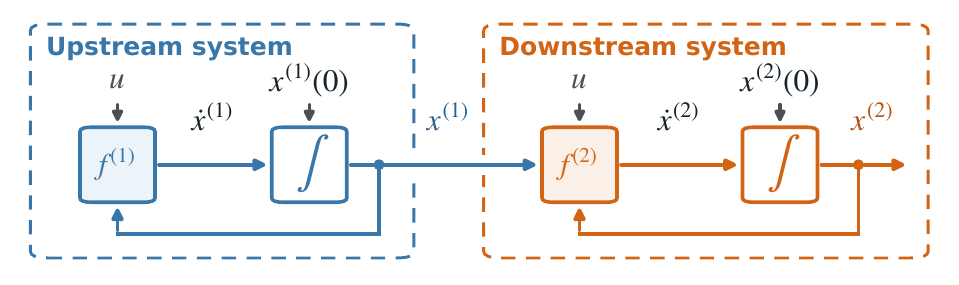}
\caption{Block diagram of a continuous-time two-stage \textbf{cascade} interconnection. The upstream state $x^{(1)}(t)$ drives the downstream dynamics $f^{(2)}$, with no feedback from the downstream system to the upstream system. Both systems receive the external input $u(t)$; integrators evolve the states from the indicated initial conditions.}
    \label{fig:cascade_continuous}
\end{figure}

%% file: sections/experimental_details_appendix.tex
\section{ODE and PDE Benchmarks}
\label{app:controlled_details}

This appendix supplements Section~\ref{sec:experiments} with benchmark definitions, task-specific designs, numerical configurations, and additional results. It also includes an additional example with nonlinear aggregation (Section~\ref{app:central_force}) and an ablation of unidirectional coupling (Section~\ref{app:unidirectional_ablation}).

\subsection{Dynamical systems and subsystem variables}
Table~\ref{tab:controlled_systems} summarizes the four benchmarks and their subsystem and backbone configurations.
\begingroup
\begin{table}[H]
\caption{Benchmark systems and specified subsystems.}
\label{tab:controlled_systems}
\centering
\footnotesize
\setlength{\tabcolsep}{2.3pt}
\resizebox{\textwidth}{!}{%
\begin{tabular}{@{}lllcll@{}}
\toprule
Case & System type & Specified subsystem & $n$ & Closure & Backbone \\
\midrule
Heat & 1D PDE & DC+Low-4 Fourier, $k=0{:}4$ & 9 & Exact & ResCNN, FNO \\
Fisher--KPP & 1D PDE & Joint low-frequency block, $k=0{:}16$ & 33 & Approximate & ResCNN, FNO \\
BVE & Spherical PDE & Real $l=1$ spherical harmonics & 3 & Exact & ResCNN, SFNO \\
Alanine NVE & ODE & Mass-weighted COM position/velocity & 6 & Exact & GNS, EGNN-velocity \\
\bottomrule
\end{tabular}}
\end{table}
\endgroup

\paragraph{Heat equation.}
The forced heat equation describes linear diffusion under an external source:
\begin{equation*}
\partial_t u(x,t)=\nu\,\partial_{xx}u(x,t)+f(x,t),
\qquad x\in[0,L),
\end{equation*}
with periodic boundary conditions. Here, $u$ is temperature, $\nu$ is thermal diffusivity, $f$ is the external source, and $L$ is the periodic domain length. Let
\begin{equation*}
a_k(t)=\frac{1}{L}\int_0^L u(x,t)e^{-2\pi ikx/L}\,dx
\end{equation*}
denote its Fourier coefficients. The subsystem retains the spatial mean and the first $K$ Fourier modes,
\begin{equation*}
z(t)=\bigl(a_0,\operatorname{Re}a_1,\operatorname{Im}a_1,\ldots,
\operatorname{Re}a_K,\operatorname{Im}a_K\bigr),
\end{equation*}
giving $2K+1$ real variables. These variables describe the mean temperature and large-scale spatial variations. Each retained coefficient satisfies
\begin{equation*}
\dot a_k=-\nu\left(\frac{2\pi k}{L}\right)^2a_k+f_k(t),
\end{equation*}
where $f_k$ is the corresponding forcing coefficient. The subsystem is therefore exactly closed given the external forcing. The main configuration uses $K=4$, yielding a 9-dimensional subsystem.

\paragraph{Fisher--KPP equation.}
The Fisher--KPP equation combines diffusion with logistic growth:
\begin{equation*}
\partial_t u(x,t)=D\,\partial_{xx}u(x,t)+r\,u(x,t)\bigl(1-u(x,t)\bigr),
\qquad x\in[0,L),
\end{equation*}
with periodic boundary conditions. Here, $u$ represents normalized population density or concentration, $D$ is the diffusion coefficient, and $r$ is the local growth rate. It describes the spreading of a population or reacting quantity through propagating fronts. The subsystem uses the same Fourier readout as Heat, retaining $k=0,\ldots,K$. These variables capture the mean concentration and the broad spatial structure of the fronts.

Unlike Heat, the nonlinear reaction term couples retained and discarded modes:
\begin{equation*}
\dot a_k=
\left[r-D\left(\frac{2\pi k}{L}\right)^2\right]a_k
-r\sum_{p\in\mathbb Z}a_p a_{k-p}.
\end{equation*}
Consequently, the retained block is not exactly closed. Its joint dynamics are learned as an approximate subsystem. The main configuration uses $K=16$, yielding 33 real variables.

\paragraph{Barotropic vorticity equation.}
The rotating-sphere barotropic vorticity equation provides a simplified model of large-scale atmospheric flow:
\begin{equation*}
\partial_t\zeta+J(\psi,\zeta+f)=0,
\qquad
\zeta=\nabla_{\mathbb S_R^2}^{2}\psi,
\qquad
f=2\Omega\sin\varphi,
\end{equation*}
where $\zeta$ is relative vorticity, $\psi$ is the streamfunction, $\varphi$ is latitude, and $J$ is the spherical Jacobian. $f$ is the Coriolis parameter, $\Omega$ is the planetary angular velocity, and $R$ is the sphere radius. Writing the vorticity in spherical harmonics, the subsystem retains the 3 real degrees of freedom at degree $l=1$:
\begin{equation*}
z(t)=\bigl(\zeta_{10}(t),\operatorname{Re}\zeta_{11}(t),
\operatorname{Im}\zeta_{11}(t)\bigr).
\end{equation*}
These coefficients represent the solid-body rotational component of the flow and are associated with its angular momentum. Their evolution is independent of the higher-degree modes: the axial component remains constant, while the other two components rotate at the planetary rotation frequency. This gives an exactly closed 3-dimensional subsystem.

\paragraph{Alanine molecular dynamics.}
The Alanine benchmark describes the constant-energy dynamics of a 22-atom alanine dipeptide:
\begin{equation*}
\dot q_i=v_i,
\qquad
m_i\dot v_i=-\nabla_{q_i}U(q)+F_i^{\mathrm{con}},
\qquad i=1,\ldots,22,
\end{equation*}
where $q_i,v_i\in\mathbb R^3$ are the position and velocity of atom $i$, $m_i$ is its mass, $U$ is the molecular potential, and $F_i^{\mathrm{con}}$ enforces the prescribed bond constraints. The complete position--velocity state has 132 components.

The subsystem consists of the center-of-mass position and velocity:
\begin{equation*}
\begin{gathered}
z=(q_{\mathrm{COM}},v_{\mathrm{COM}}),\\
q_{\mathrm{COM}}=\frac{1}{M}\sum_i m_iq_i,
\qquad
v_{\mathrm{COM}}=\frac{1}{M}\sum_i m_iv_i,
\qquad M=\sum_i m_i.
\end{gathered}
\end{equation*}
These 6 variables describe the molecule's overall translation rather than its internal conformation. Because the potential and internal constraints are translation invariant and no external force acts on the molecule,
\begin{equation*}
\dot q_{\mathrm{COM}}=v_{\mathrm{COM}},
\qquad
\dot v_{\mathrm{COM}}=0.
\end{equation*}
The translational subsystem is therefore exactly closed.

\subsection{Correction maps and subsystem-only decoders}
\label{app:correction_decoders}

\subsubsection{Full-state correction}
\label{app:full_state_correction}

Matching the subsystem forecasts constrains only part of the full state, leaving freedom in how the remaining components are treated. Our correction maps either preserve the backbone's remaining prediction or learn additional adjustments while retaining the imposed subsystem constraints.

\paragraph{Heat and Fisher--KPP.}
We Fourier-transform the backbone proposal, replace the coefficients of the selected subsystem modes with the subsystem forecasts, and apply the inverse transform. All other coefficients remain unchanged at this stage. For FNO, the resulting field is used directly as the corrected state. This minimally invasive choice retains the backbone's prediction outside the selected modes and is naturally aligned with FNO's explicit spectral parameterization.

For ResCNN, we augment spectral replacement with a learned residual to adjust the remaining field in response to subsystem guidance. The correction network takes the current field, the subsystem-aligned proposal, and the subsystem forecasts as inputs, together with the forcing field when present, and outputs a residual field. We remove the selected Fourier components from this residual before adding it to the aligned proposal to form a corrected candidate. This extends the correction to the remaining components without changing the candidate's subsystem-imposed coefficients.

\paragraph{BVE.}
The 3 degree-1 spherical-harmonic variables constrain the solid-body rotational component, but do not determine the remaining vorticity structure. To adjust this remaining structure as well, both BVE backbones combine spectral matching with a learned residual correction. We first replace the degree-1 components of the backbone proposal with the subsystem forecasts. A convolutional correction network takes the current vorticity field, the original proposal, and the subsystem-aligned proposal as inputs and outputs a vorticity-field residual. Its degree-1 components are removed before addition to the aligned proposal, preserving the imposed subsystem coefficients.

\paragraph{Alanine.}
The subsystem describes overall translation, so matching its forecasts need not alter the molecule's internal configuration. After the backbone's constraint projection, we uniformly shift all atomic positions and velocities to match the predicted center-of-mass position and velocity. These shifts preserve the relative positions and velocities between atoms and require no learned correction network.

\subsubsection{Subsystem-only reconstruction}
\label{app:subsystem_only_decoders}

These decoders specify how the independently predicted subsystem variables are converted into full-state estimates for the subsystem-only diagnostic. 

\paragraph{Spectral reconstruction.}
For Heat, Fisher--KPP, and BVE, the predicted spectral coefficients are transformed back to physical space, with all unretained coefficients set to zero. The resulting reconstruction therefore contains only the selected spectral components. 

\paragraph{Molecular reconstruction.}
For Alanine, the center-of-mass variables do not uniquely specify the internal conformation; a learned decoder maps them to internal positions and velocities. These outputs are mass-centered before adding the predicted center-of-mass components, ensuring that the reconstruction retains the subsystem forecasts. The decoder is fitted on training trajectories.

\subsection{Numerical configurations}
\label{app:implementation_matrix}

\paragraph{Simulation settings.}
The discretization and time intervals used to generate the data are summarized in Table~\ref{tab:simulation_settings}. The prediction interval is the time between consecutive emulator states.

\begin{table}[H]
\caption{Simulation settings and emulator prediction intervals.}
\label{tab:simulation_settings}
\centering
\small
\setlength{\tabcolsep}{4pt}
\begin{tabularx}{\linewidth}{@{}lXlll@{}}
\toprule
System & Spatial representation & Data generation & \shortstack{Integration\\step} & \shortstack{Prediction\\interval} \\
\midrule
Heat & 256 grid points & Custom solver & 0.05 & 0.05 \\
Fisher--KPP & 2048 reference points; 512 emulator points & Custom solver & 0.00125 & 0.02 \\
BVE & $32\times64$ spherical grid & SpeedyWeather.jl & 900 s & 3 hours \\
Alanine & 22 atoms & OpenMM & 0.1 fs & 0.01 ps \\
\bottomrule
\end{tabularx}
\end{table}

Heat uses $L=2\pi$ and $\nu=0.02$, with time-dependent external forcing. Fisher--KPP uses $L=128$ and $D=r=1$, initialized from localized smooth profiles. Alanine uses the AMBER ff14SB force field \citep{maier2015ff14sb} with the Onufriev--Bashford--Case model II (OBC2) for implicit solvation \citep{onufriev2004obc}. Time is expressed in simulation units for Heat and Fisher--KPP.

\paragraph{Dataset splits.}
The numbers of training and development trajectories are listed in Table~\ref{tab:benchmark_splits}. Normalization statistics are computed from the training split only, and comparisons use the same development trajectories and initial states.

\begin{table}[H]
\caption{Numbers of training and development trajectories.}
\label{tab:benchmark_splits}
\centering
\small
\setlength{\tabcolsep}{8pt}
\begin{tabular}{@{}lcccc@{}}
\toprule
Split & Heat & Fisher--KPP & BVE & Alanine \\
\midrule
Training & 64 & 64 & 128 & 96 \\
Development & 12 & 12 & 32 & 32 \\
\bottomrule
\end{tabular}
\end{table}

\FloatBarrier

\input{sections/appendix_d_additional}

%% file: sections/appendix_d_additional.tex
\subsection{Additional results}
\label{app:complete_error_tables}

\subsubsection{Prediction performance}

\paragraph{Numerical results for diffusion benchmarks.} Table~\ref{tab:diffusion_endpoint_results} supplements Fig.~\ref{fig:diffusion_main} with endpoint NRMSE values and sample SDs for the Heat and Fisher--KPP rollouts.

\begin{table}[H]
\caption{Full-state endpoint NRMSE for Heat and Fisher--KPP at selected rollout steps (mean $\pm$ sample SD).}
\label{tab:diffusion_endpoint_results}
\centering
\footnotesize
\setlength{\tabcolsep}{2.2pt}
\renewcommand{\arraystretch}{0.93}
\resizebox{\linewidth}{!}{%
\begin{tabular}{@{}lrrrr@{}}
\toprule
Method & \multicolumn{4}{c}{Endpoint NRMSE at the indicated rollout step} \\
\midrule
Heat/ResCNN & $h=1$ & $h=100$ & $h=200$ & $h=500$ \\
Base   & $.00949\pm.00358$ & $.00602\pm.00130$ & $.00739\pm.00235$ & $.00964\pm.00353$ \\
HINE   & $.00947\pm.00357$ & $.00583\pm.00132$ & $.00727\pm.00233$ & $.00944\pm.00327$ \\
CasEm  & $.00938\pm.00357$ & $\mathbf{.00353\pm.00100}$ & $\mathbf{.00376\pm.00124}$ & $\mathbf{.00415\pm.00157}$ \\
Oracle & $.00938\pm.00357$ & $.00352\pm.00101$ & $.00376\pm.00123$ & $.00414\pm.00155$ \\
SO     & $.408\pm.152$ & $.305\pm.166$ & $.195\pm.109$ & $.138\pm.0843$ \\
\cmidrule(lr){1-5}
Heat/FNO & $h=1$ & $h=100$ & $h=500$ & $h=16000$ \\
Base   & $(2.32\pm.680)\!\times10^{-4}$ & $(6.42\pm1.49)\!\times10^{-4}$ & $.00132\pm.000651$ & $.0874\pm.0820$ \\
HINE   & $(2.31\pm.680)\!\times10^{-4}$ & $(6.30\pm1.40)\!\times10^{-4}$ & $.00118\pm.000591$ & $.0794\pm.0802$ \\
CasEm  & $(2.23\pm.639)\!\times10^{-4}$ & $\mathbf{(1.31\pm.459)\!\times10^{-4}}$ & $\mathbf{(8.92\pm4.00)\!\times10^{-5}}$ & $\mathbf{(1.32\pm.594)\!\times10^{-4}}$ \\
Oracle & $(2.23\pm.639)\!\times10^{-4}$ & $(1.31\pm.459)\!\times10^{-4}$ & $(8.90\pm4.01)\!\times10^{-5}$ & $(1.27\pm.580)\!\times10^{-4}$ \\
SO     & $.408\pm.152$ & $.305\pm.166$ & $.138\pm.0843$ & $.0915\pm.0746$ \\
\cmidrule(lr){1-5}
Fisher--KPP/ResCNN & $h=1$ & $h=100$ & $h=500$ & $h=1000$ \\
Base   & $.00333\pm.00145$ & $.0271\pm.00257$ & $.0449\pm.0223$ & $(8.31\pm17.3)\!\times10^{3}$ \\
HINE   & $.00331\pm.00147$ & $.0167\pm.00622$ & $.0353\pm.0181$ & $(3.38\pm7.63)\!\times10^{3}$ \\
CasEm  & $\mathbf{.00169\pm.000862}$ & $\mathbf{.0120\pm.00549}$ & $\mathbf{.0134\pm.00302}$ & $\mathbf{.443\pm.374}$ \\
Oracle & $.00170\pm.00102$ & $.00215\pm.000693$ & $.00242\pm.00122$ & $.490\pm.494$ \\
SO     & $.178\pm.0242$ & $.0339\pm.00736$ & $.0139\pm.00294$ & $.0105\pm.00209$ \\
\cmidrule(lr){1-5}
Fisher--KPP/FNO & $h=1$ & $h=100$ & $h=500$ & $h=1000$ \\
Base   & $.00340\pm.00136$ & $.0363\pm.0174$ & $.578\pm.121$ & $744\pm241$ \\
HINE   & $.00390\pm.00159$ & $.0314\pm.0137$ & $.787\pm.178$ & $(2.52\pm.793)\!\times10^{3}$ \\
CasEm  & $.00334\pm.00140$ & $\mathbf{.0126\pm.00538}$ & $\mathbf{.0154\pm.00283}$ & $\mathbf{.0181\pm.00628}$ \\
Oracle & $.00329\pm.00140$ & $.00391\pm.000503$ & $.00775\pm.00136$ & $.0147\pm.00744$ \\
SO     & $.178\pm.0242$ & $.0339\pm.00736$ & $.0139\pm.00294$ & $.0105\pm.00209$ \\
\bottomrule
\end{tabular}}
\end{table}

\subsubsection{Subsystem selection}
\label{app:frequency_band_ablation}

\paragraph{Subsystem size.} Tables~\ref{tab:subsystem_size_heat} and~\ref{tab:subsystem_size_fisher} provide the numerical results underlying Fig.~\ref{fig:subsystem_analysis}(a,b), with $K$ denoting the highest retained Fourier mode. The FNO backbone is fixed across subsystem sizes, and the Fisher--KPP subsystem models share a common initialization.

\begin{table}[H]
\caption{Heat/FNO subsystem-size ablation at 1000 steps. Full-state NRMSE ($\times10^{-3}$; mean $\pm$ sample SD) over 12 development trajectories.}
\label{tab:subsystem_size_heat}
\label{tab:subsystem_size_complete}
\centering
\small
\setlength{\tabcolsep}{5pt}
\begin{tabular}{@{}crr@{}}
\toprule
$K$ & CasEm & Oracle \\
\midrule
Base & \multicolumn{2}{c}{$1.616\pm0.619$} \\
0 & $1.049\pm0.569$ & $1.049\pm0.569$ \\
1 & $.4418\pm0.208$ & $.4418\pm0.208$ \\
2 & $.1868\pm0.106$ & $.1868\pm0.106$ \\
3 & $.1135\pm0.0641$ & $.1135\pm0.0641$ \\
4 & $.1054\pm0.0630$ & $.1054\pm0.0630$ \\
5 & $.07686\pm0.0390$ & $.07685\pm0.0389$ \\
6 & $.06798\pm0.0325$ & $.06799\pm0.0325$ \\
\bottomrule
\end{tabular}
\end{table}

\begin{table}[H]
\caption{Fisher--KPP/FNO subsystem-size ablation at 1000 steps. Values are full-state NRMSE (mean $\pm$ sample SD) over 12 development trajectories.}
\label{tab:subsystem_size_fisher}
\centering
\small
\setlength{\tabcolsep}{5pt}
\begin{tabular}{@{}crr@{}}
\toprule
$K$ & CasEm & Oracle \\
\midrule
Base & \multicolumn{2}{c}{$748.98\pm244$} \\
4  & $175.95\pm98.4$ & $199.03\pm95.8$ \\
8  & $15.075\pm6.83$ & $16.286\pm8.15$ \\
16 & $.01911\pm.00602$ & $.01476\pm.00741$ \\
32 & $.06342\pm.00848$ & $.01188\pm.00681$ \\
\bottomrule
\end{tabular}
\end{table}

\paragraph{Frequency selection.} Complementing the subsystem-size analysis in Section~\ref{sec:subsystem_choice}, we compare which Fourier modes are retained. Table~\ref{tab:frequency_band} shows that the low-frequency band $[0,8)$ outperforms both equal-width and wider high-frequency bands. The spectral error distributions in Figure~\ref{fig:error_spectra} further show that Base error increasingly concentrates in low-frequency components during rollout, explaining why correcting these components is particularly effective.

\begin{table}[H]
\caption{Mean full-state NRMSE at 1000 steps for Fisher--KPP/FNO with different Fourier subsystems.}
\label{tab:frequency_band}
\centering
\small
\setlength{\tabcolsep}{5pt}
\begin{tabular}{@{}lrrrrrr@{}}
\toprule
 & Base & $[0,8)$ & $[8,16)$ & $[16,24)$ & $[16,32)$ & Joint $0{:}16$ \\
\midrule
NRMSE & $744.066$ & $27.616$ & $55.751$ & $290.060$ & $282.600$ & $\mathbf{0.0181}$ \\
\bottomrule
\end{tabular}
\end{table}

\begin{figure}[H]
\centering
\includegraphics[width=\linewidth]{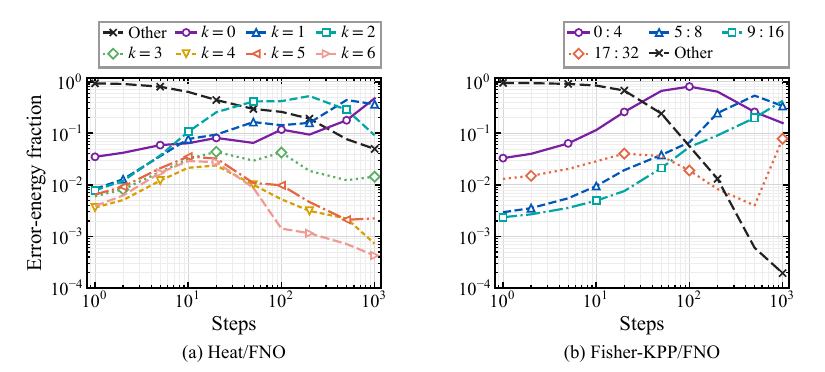}
\caption{Spectral fractions of Base error for Heat/FNO and Fisher--KPP/FNO. Colors distinguish individual modes for Heat and disjoint frequency bands for Fisher--KPP.}
\label{fig:error_spectra}
\end{figure}

\FloatBarrier

\subsection{Additional experimental example}
\label{app:central_force}

We use central-force orbital dynamics as an additional example to assess CasEm with a nonlinear aggregate readout and both MLP and Hamiltonian neural network (HNN) backbones.

\paragraph{Dynamics and subsystem.}
The central-force benchmark has a 6-dimensional full state $x=(q,v)\in\mathbb R^6$ and describes 3-dimensional orbital motion under an attractive inverse-square force and an additional radial input:
\begin{equation*}
\dot q=v,
\qquad
\mu\dot v=-\frac{\kappa}{\|q\|^3}q
+u(t)\frac{q}{\|q\|},
\qquad q,v\in\mathbb R^3.
\end{equation*}
Here, $q$ and $v$ are position and velocity, $\mu$ is the mass parameter, $\kappa$ controls the attractive force strength, and $u(t)$ is the signed radial input force.
For $r>0$, the 3-dimensional subsystem state is defined by
\begin{equation*}
z=(r,p_r,\ell)\in\mathbb R^3,
\qquad
r=\|q\|,
\qquad
p_r=\frac{q^\top v}{\|q\|},
\qquad
\ell=\|q\times v\|.
\end{equation*}
Here, $r$ is radial distance, $p_r$ denotes radial velocity, and $\ell$ is specific angular-momentum magnitude. They satisfy
\begin{equation*}
\dot r=p_r,\qquad
\dot p_r=\frac{\ell^2}{r^3}
-\frac{\kappa}{\mu r^2}+\frac{u(t)}{\mu},
\qquad
\dot\ell=0.
\end{equation*}
Thus, the subsystem captures radial orbital evolution without retaining the full spatial orientation. It is exactly closed, while its aggregation map from $(q,v)$ to $(r,p_r,\ell)$ is nonlinear.

\paragraph{Experimental configuration.} We use $\mu=\kappa=1$, randomly oriented initial orbits, and a time-dependent radial input. A custom solver generates 3-dimensional position--velocity trajectories with both integration and prediction intervals of $0.01$ simulation time units. The dataset contains 128 training and 24 development trajectories; normalization statistics are computed from the training split, and methods are compared on the same development trajectories and initial states. For HINE, fixed pooling of position and velocity components defines 4- and 2-dimensional coarse states. Their predicted future values guide a learned residual correction of the backbone prediction. Subsystem-only rollouts use a decoder fitted on training trajectories to reconstruct position and velocity from $(r,p_r,\ell)$.

\paragraph{Full-state correction.} Nonlinear aggregate targets are imposed by separating orbital direction from radial and tangential magnitudes. For a candidate state $(q,v)$, let $e_r=q/\|q\|$ be the radial unit vector and $e_\tau$ the unit direction of $v-(v^\top e_r)e_r$. Given the subsystem target $(r^\star,p_r^\star,\ell^\star)$, the state is reconstructed as
\begin{equation*}
q'=r^\star e_r,
\qquad
v'=p_r^\star e_r+\frac{\ell^\star}{r^\star}e_\tau.
\end{equation*}
This operation retains the candidate radial and tangential directions while matching the predicted radius, radial velocity, and specific angular-momentum magnitude.

With MLP, the backbone proposal is first reconstructed in this way. A correction network takes the current full state, external input, reconstructed proposal, and subsystem target as inputs and predicts a 6-dimensional position--velocity residual. The residual is added to the reconstructed proposal, and the same target-matching operation is applied again.

With HNN, the low-dimensional predictor supplies the next aggregate state and an initial angular increment. Learned phase-correction and fusion modules use the subsystem state, external input, and aggregate discrepancies and relative phases involving the HNN proposal to adjust this angle within the orbital plane. Position and velocity are then reconstructed from the updated radial and tangential directions using the same subsystem targets.

\paragraph{Results and analysis.}
Table~\ref{tab:central_force_results} shows that CasEm
substantially outperforms the baselines with both backbones,
suppressing the MLP backbone's explosive error growth and
improving prediction accuracy with HNN.
Compared with the simpler closed subsystems in the main-text
examples, the central-force subsystem provides more limited
simplification of the original system in both dimension and
dynamical complexity: it reduces the state dimension from
6 to 3 while retaining nonlinear radial dynamics.
Errors in the predicted radius or angular momentum can
affect orbital angular speed and accumulate into phase
errors in the full-state trajectory.
This retained dynamical complexity and its influence on
orbital phase may help explain the larger gap between
CasEm and Oracle, despite the substantial gains over the
baselines. These results demonstrate that CasEm remains
effective with nonlinear aggregate readouts even when
the subsystem offers only modest simplification.

\begin{table}[H]
\caption{Full-state endpoint NRMSE for central-force dynamics (mean $\pm$ sample SD).}
\label{tab:central_force_results}
\centering
\small
\setlength{\tabcolsep}{2.6pt}
\renewcommand{\arraystretch}{0.96}
\resizebox{\linewidth}{!}{%
\begin{tabular}{@{}llrrrr@{}}
\toprule
Backbone & Method & $h=1$ & $h=200$ & $h=1000$ & $h=2000$ \\
\midrule
MLP & Base  & $(9.82\pm7.03)\!\times10^{-4}$ & $.390\pm.326$ & $1.32\pm.491$ & $(1.24\pm6.08)\!\times10^{36}$ \\
& HINE      & $(9.82\pm7.03)\!\times10^{-4}$ & $.392\pm.329$ & $1.36\pm.560$ & $(1.36\pm6.68)\!\times10^{27}$ \\
& CasEm     & $\mathbf{(5.58\pm3.74)\!\times10^{-4}}$ & $\mathbf{.0729\pm.0501}$ & $\mathbf{.312\pm.360}$ & $\mathbf{.595\pm.481}$ \\
& Oracle    & $(5.33\pm3.73)\!\times10^{-4}$ & $.0615\pm.0372$ & $.172\pm.112$ & $.338\pm.180$ \\
\cmidrule(lr){1-6}
HNN & Base  & $(1.56\pm.930)\!\times10^{-4}$ & $.0225\pm.0157$ & $.120\pm.0936$ & $.271\pm.209$ \\
& HINE      & $.0170\pm.00635$ & $1.33\pm.500$ & $1.27\pm.250$ & $1.11\pm.234$ \\
& CasEm     & $\mathbf{(4.88\pm5.58)\!\times10^{-5}}$ & $\mathbf{.00965\pm.00913}$ & $\mathbf{.0633\pm.0970}$ & $\mathbf{.145\pm.139}$ \\
& Oracle    & $(4.24\pm6.98)\!\times10^{-5}$ & $.00601\pm.00663$ & $.0151\pm.0195$ & $.0292\pm.0398$ \\
\cmidrule(lr){1-6}
\multicolumn{2}{l}{SO (shared)} & $.877\pm.196$ & $.905\pm.201$ & $.907\pm.211$ & $.981\pm.288$ \\
\bottomrule
\end{tabular}}
\end{table}

\FloatBarrier

\subsection{Ablation of unidirectional coupling}
\label{app:unidirectional_ablation}

We assess the role of CasEm's unidirectional coupling by introducing a learned reverse connection from the full state to the subsystem forecast. Such a connection can, in principle, capture unresolved influences when aggregate dynamics are only approximately closed, but makes the guiding prediction explicitly dependent on the full-state rollout, thereby breaking the one-way cascade structure.

\begin{figure}[H]
\begin{minipage}[c]{0.54\linewidth}
\paragraph{Experimental setup.} We consider Heat and Fisher--KPP with ResCNN and FNO backbones. 
The two systems provide contrasting closure settings.
For Heat, the retained Fourier modes evolve independently
of discarded modes, making full-state information unnecessary
for exact aggregate evolution.
In Fisher--KPP, the nonlinear reaction couples retained
and discarded modes, so the reverse connection may convey
information absent from the subsystem state.
We augment the subsystem update in Eq.~(\ref{eq:casem_rollout}) with a learned full-state residual, yielding the bidirectionally coupled recurrence illustrated in Figure~\ref{fig:feedback_architecture}: \begin{equation} \left\{ \begin{aligned} \widehat z_{t+1} &=g_\psi(\widehat z_t,u_t)+R_\omega(\widehat x_t,\widehat z_t,u_t),\\ \widehat x_{t+1} &=\mathcal C_\phi\!\left(f_\theta(\widehat x_t,u_t),\widehat z_{t+1}\right). \end{aligned} \right. \label{eq:feedback_ablation} \end{equation} where $R_\omega$ is implemented by a lightweight 1D periodic
convolutional network followed by the subsystem's aggregation
map. Its output is a residual correction to the subsystem's
aggregate variables, i.e., the retained Fourier coefficients.
The forcing input $u_t$ is omitted for unforced dynamics.
The backbone, subsystem predictor, correction map, and rollout settings remain fixed. Only the residual network is fine-tuned, using aggregate-prediction losses over progressively longer rollout windows.
\end{minipage}\hfill
\begin{minipage}[c]{0.43\linewidth}
\centering
\includegraphics[width=\linewidth]{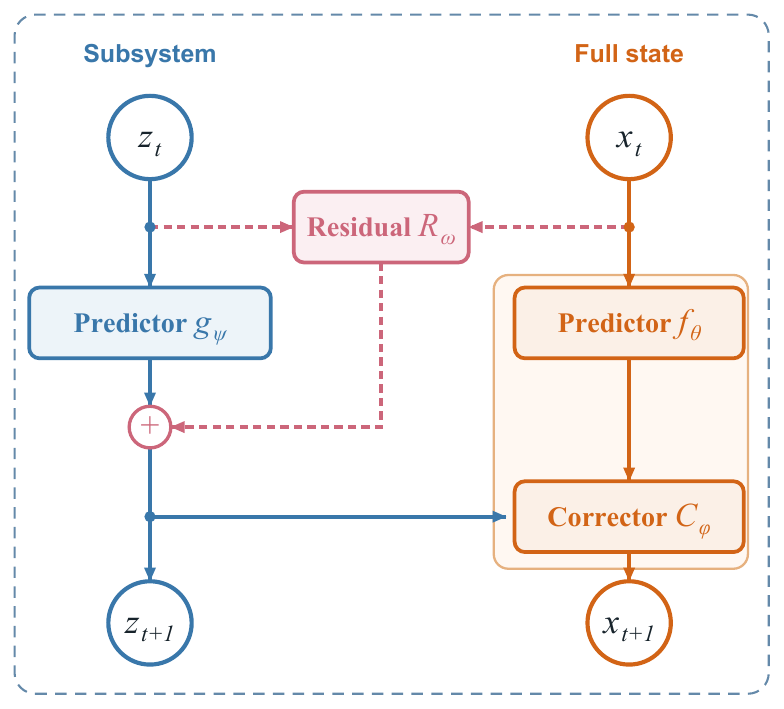}
\caption{One-step bidirectional variant of Figure~\ref{fig:casem_architecture}(c). Pink elements denote the added residual network and connections. Hats and forcing inputs are omitted.}
\label{fig:feedback_architecture}
\end{minipage}
\end{figure}

\paragraph{Results and analysis.} Figure~\ref{fig:feedback_ablation} shows that bidirectional coupling does not improve long-horizon accuracy over CasEm in any of the four configurations. Full-state errors increase at longer horizons and can even exceed those of Base, with generally larger variability across trajectories. Even for Fisher--KPP with FNO, where the feedback variant closely matches CasEm over intermediate horizons, its errors still grow substantially at longer horizons. The bottom row further shows pronounced deterioration in aggregate-component accuracy. This deterioration is consistent with error amplification through the added reverse connection: full-state errors can contaminate aggregate forecasts, which then affect subsequent full-state predictions. Together, these results support the importance of CasEm's unidirectional coupling for maintaining accurate and consistent long-horizon predictions in these benchmarks.

\begin{figure}[t]
\centering
\includegraphics[width=\linewidth]{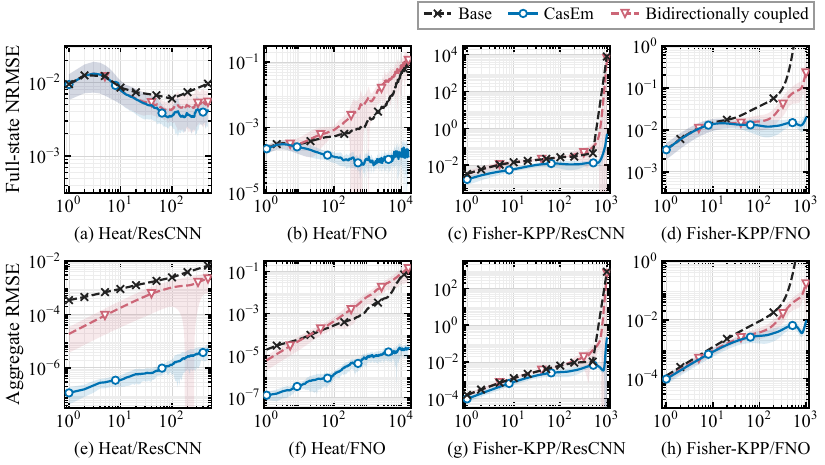}
\caption{Ablation of unidirectional coupling on Heat and Fisher--KPP. Top: full-state NRMSE. Bottom: aggregate-component RMSE of the full-state predictions. Lines show means over 12 development trajectories; shaded bands show one sample SD for CasEm and the bidirectionally coupled variant. Horizontal axes show rollout steps.}
\label{fig:feedback_ablation}
\end{figure}

%% file: sections/climate_results_appendix.tex
\input{sections/appendix_e_restructured}

%% file: sections/appendix_e_restructured.tex

\section{Climate Emulation}
\label{app:climate_appendix}

This appendix supplements Section~\ref{sec:climate_emulation} with subsystem details, evaluation protocols, metric definitions, and additional results.

\subsection{Subsystem details}
\label{app:climate_details}

The water and energy subsystems introduced in Section~\ref{sec:climate_emulation} are specified below, including their aggregate variables, independent dynamics, and full-state corrections.

\subsubsection{Water subsystem}

\paragraph{Aggregate variables.} Let $x^{\mathrm{water}}_{k,\ell}(s)$ denote the total-water mixing ratio at grid point $s$, vertical layer $\ell$, and subsystem time $t_k$, with $k$ indexing 4-day updates. For a target region $R_r$ containing $|R_r|$ grid points, the regional aggregate is
\begin{equation*}
 z_{k,\ell,r}=|R_r|^{-1}\sum_{s\in R_r}x^{\mathrm{water}}_{k,\ell}(s).
\end{equation*}
It summarizes regional moisture and its vertical distribution. Each of the 8 water layers is reduced by arithmetic $15\!\times\!15$ block means to a $12\!\times\!24$ subsystem target grid. 

\paragraph{Independent dynamics.}
A background $b(t)$, consisting of a linear trend and 4 annual harmonics, is fitted to the mean of the fitting trajectories. The 4 lowest 2-dimensional discrete cosine transform (DCT) modes ($2\!\times\!2$) are retained per layer. Let $D_{\mathrm{DCT}}$ denote the projection onto the retained
DCT modes, and let $D_{\mathrm{DCT}}^\dagger$ reconstruct the
residual field by zero-filling the omitted modes before inversion.
The vector $a_k$ denotes the model's retained DCT coefficient state for the background-subtracted regional water fields across all 8 layers.

A general linear model for the coefficient dynamics uses a transition matrix $A_{\mathrm w}$ and an input matrix $B_{\mathrm w}$.
The prescribed exogenous input vector $u_k$ is defined on the
4-day subsystem time grid and may include solar irradiance
and sea-surface temperature available during forecasting.
These inputs are not obtained from the full-state rollout.
Writing $[\cdot]_+$ for componentwise nonnegative clipping,
the coefficient update and reconstructed water forecast are
\[
a_{k+1}=A_{\mathrm w}a_k+B_{\mathrm w}u_k,
\qquad
\widehat z_k=
\bigl[b(t_k)+D_{\mathrm{DCT}}^\dagger a_k\bigr]_+.
\]
When appropriate, $A_{\mathrm w}$ may be constrained to be
Schur stable, with all eigenvalues strictly inside the unit disk.
The initial coefficient vector encodes the difference between the initial target and $b(t_0)$. The predictor has no full-state feedback and clips reconstructed water to be nonnegative. Figure~\ref{fig:water_sdy_workflow}(a) summarizes this construction.

\paragraph{Full-state correction.}
S-DY predicts 6 consecutive states at 6-hour intervals per autoregressive round, advancing 1.5 days. The water subsystem independently predicts the next aggregate state 4 days ahead. Corrections are applied every 12 days, corresponding to 8 S-DY rounds and 3 subsystem steps (Fig.~\ref{fig:water_sdy_workflow}(b)). Subsystem targets are lifted to the native grid, and target--proposal mismatches are aggregated over correction regions using cosine-area weights. Layer-specific regional mixing-ratio increments use a gain of $0.5$ and are capped at $\pm5\times10^{-4}\,\mathrm{kg\,kg^{-1}}$. No non-water field is changed directly.

\begin{figure}[!t]
\centering
\includegraphics[width=\linewidth]{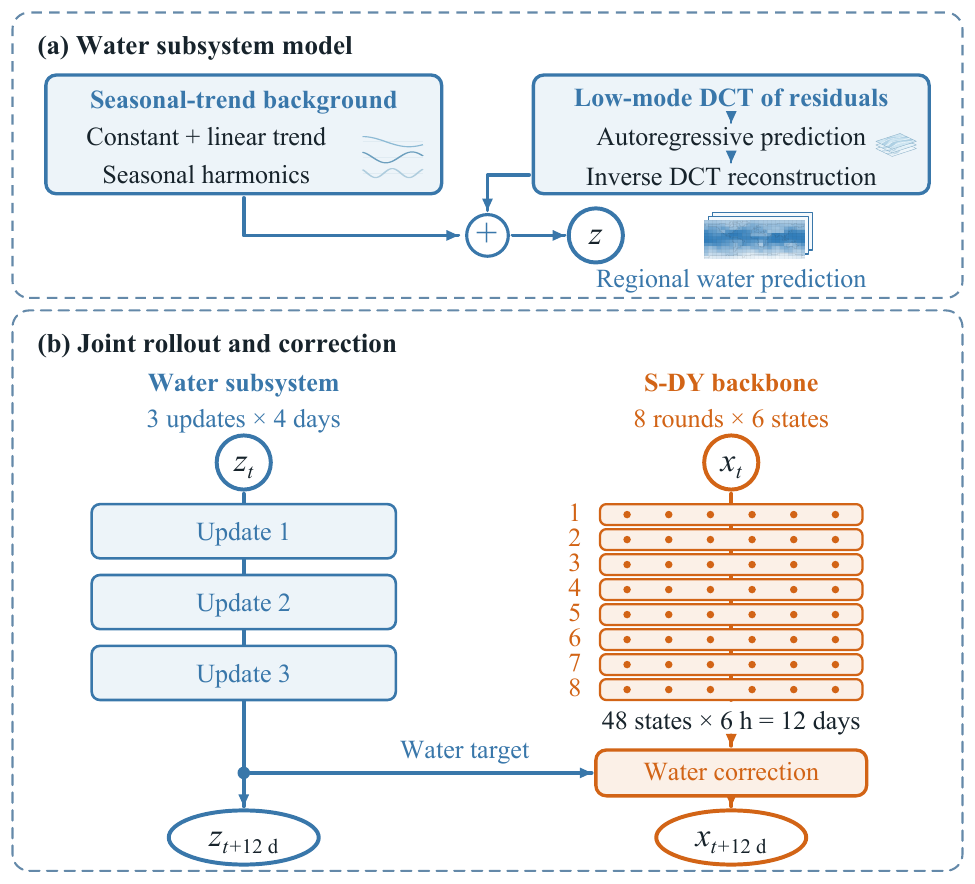}
\caption{Water-subsystem construction and one-way guidance of S-DY. (a) Background-subtracted water fields undergo low-mode DCT encoding, autoregressive prediction, and inverse DCT reconstruction. Restoring the fitted seasonal--trend background yields regional water predictions. (b) 3 subsystem updates of 4 days each match 8 S-DY rounds, each producing 6 states at 6-hour intervals. Every 12 days, the subsystem target corrects only the full-state water fields, without feedback to the subsystem. Here, $t$ denotes physical time.}
\label{fig:water_sdy_workflow}
\end{figure}

\subsubsection{Energy subsystem}

\paragraph{Aggregate variables.} The energy readout is global-mean column-integrated moist energy. With 4 frames per day at 6-hour intervals,
\begin{equation*}
 E(X_k)=\frac14\sum_{j=1}^{4}\sum_s w_s\sum_\ell
 \left(c_pT_{k,j,\ell}(s)+L_vx^{\mathrm{water}}_{k,j,\ell}(s)+g h_s\right)
 \frac{\Delta p_{k,j,\ell}(s)}{g},
\end{equation*}
where $w_s$ is normalized cosine-latitude weight and $h_s$ is surface height; kinetic energy is not included. Here $x^{\mathrm{water}}$, $T$, and $\Delta p$ are total-water mixing ratio, temperature, and layer pressure thickness; $c_p$ is the specific heat capacity, $L_v$ the latent heat of vaporization, and $g$ gravitational acceleration, and $X_k$ comprises the 4 proposal frames for day $k$, sampled at 6-hour intervals. 

\paragraph{Independent dynamics.}
A learned forced affine model evolves independently,
\begin{equation*}
 \widehat E_{k+1}=a\widehat E_k+b^\top u_k+c,
\end{equation*}
using forcing inputs $u_k$. It has no full-state feedback, and the fitted affine model is stable.

\paragraph{Full-state correction.}
Because global energy is a single scalar, the same energy mismatch can correspond to different temperature-correction patterns across regions and vertical levels. We therefore use a spatially conditioned diffusion U-Net to model the conditional distribution of these corrections. Conditioned on the energy mismatch, external forcing, and the backbone's temperature and pressure fields, it generates temperature increments for the 4 daily frames. Training uses paired frozen-ACE predictions and reference states from IC1--10, with their temperature differences as targets and reference-derived energy differences as conditions. The network learns to recover these residual fields from Gaussian-noise-corrupted versions and is further fine-tuned on prediction--reference pairs collected during corrected rollouts.
With gain $\eta$, the desired corrected readout is
\begin{equation*}
 E_{\mathrm{desired}}=E(X_{\mathrm{proposal}})+
 \eta\bigl(\widehat E-E(X_{\mathrm{proposal}})\bigr).
\end{equation*}
After the diffusion update, a spatially uniform temperature offset enforces this desired readout. No other fields are directly changed, and the final corrected frame is fed back to the backbone. For ACE-Energy, corrections are applied daily with gain $\eta=0.02$ and 20 denoising diffusion implicit model (DDIM) sampling steps.

\subsection{Evaluation protocol and metrics}
\label{app:climate_protocol}
\label{app:climate_metrics}

This section gives the mathematical definitions and aggregation rules for the metrics introduced in Section~\ref{sec:climate_emulation}, including the regional case study.

\paragraph{Data splits.} Following the training--validation split used by ACE and S-DY \citep{wattmeyer2023ace,ruhlingcachay2024sphericaldyffusion}, we fit the subsystem on IC1--10 and use IC11 for the main evaluation.

\subsubsection{Global rollout and ensemble metrics}
\label{app:climate_ensemble}

\paragraph{Global rollout errors.}
Let $T$ be the number of evaluated time points, excluding initialization, and $S$ the number of spatial grid points. For matched forecast trajectory $i$, field $v$, time $t$, and grid point $s$, define
\begin{equation*}
 e_{i,t,v}(s)=\widehat x_{i,t,v}(s)-x_{i,t,v}(s),
 \qquad
 \bar e_{i,v}(s)=T^{-1}\sum_t e_{i,t,v}(s),
\end{equation*}
and the unweighted reference spatiotemporal population variance
\begin{equation*}
 \mu_{i,v}=(TS)^{-1}\sum_{t,s}x_{i,t,v}(s),\qquad
 \sigma_{i,v}^{2}=(TS)^{-1}\sum_{t,s}(x_{i,t,v}(s)-\mu_{i,v})^2.
\end{equation*}
The metrics are defined as
\begin{equation*}
 \mathrm{ST}_{i,v}=
 \frac{\sqrt{T^{-1}\sum_t\sum_s w_s e_{i,t,v}(s)^2}}{\sigma_{i,v}},
 \qquad
 \mathrm{TM}_{i,v}=
 \frac{\sqrt{\sum_s w_s\bar e_{i,v}(s)^2}}{\sigma_{i,v}},
\end{equation*}
where $w_s=\cos(\operatorname{lat}_s)/\sum_{s'}\cos(\operatorname{lat}_{s'})$. Both ST and TM use cosine-latitude area weighting in the error numerator and are normalized by the reference spatiotemporal standard deviation. Group scores average the normalized variable-wise errors equally.

\paragraph{Energy-readout errors.} These are computed from the full-state rollout, rather than the independent subsystem forecast. For reference and predicted daily energies $E_t$ and $\widehat E_t$, respectively, $\mathrm{ST}=\mathrm{RMSE}(E_{1:T},\widehat E_{1:T})/\sigma_E$ and $\mathrm{TM}=|\operatorname{mean}_t(\widehat E_t-E_t)|/\sigma_E$, where $\sigma_E$ is the reference daily-energy standard deviation.

\paragraph{Ensemble scores.} For the evaluation trajectory, write $\sigma_v$ for the reference standard deviation defined above. Let $M$ be the ensemble size, $X_{m,t,v}(s)$ the forecast of field $v$ at time $t$ and grid point $s$ in member $m$, and $\overline{x}^{*}_{v}(s)$ the reference 10-year mean. The member and ensemble time-mean fields are
\begin{equation*}
 \overline X_{m,v}(s)=\frac1T\sum_{t=1}^{T}X_{m,t,v}(s),
 \qquad
 \overline X_v(s)=\frac1M\sum_{m=1}^{M}\overline X_{m,v}(s).
\end{equation*}
The normalized ensemble-mean error and fair CRPS are
\begin{equation*}
 \mathrm{NRMSE}_{\mathrm{ens},v}
 =\frac{\sqrt{\sum_s w_s\bigl(\overline X_v(s)-\overline{x}^{*}_{v}(s)\bigr)^2}}{\sigma_v},
\end{equation*}
\begin{equation*}
\begin{aligned}
 \mathrm{fCRPS}_v
 =\frac1{\sigma_v}\sum_s w_s\Bigg[
 &\frac1M\sum_{m=1}^{M}\left|\overline X_{m,v}(s)-\overline{x}^{*}_{v}(s)\right|\\
 &-\frac1{2M(M-1)}\sum_{\substack{m,n=1\\m\ne n}}^{M}
 \left|\overline X_{m,v}(s)-\overline X_{n,v}(s)\right|\Bigg].
\end{aligned}
\end{equation*}
The double sum uses distinct ordered member pairs. The reported S-DY scores use $M=25$ members and are averaged equally over the variables in each group.

\paragraph{Statistical summaries.} Member summaries are reported as member mean $\pm$ sample SD. For Fig.~\ref{fig:climate_annual_tm}, annual means and sample SDs are computed across the rollouts available in each calendar year: 1, 2, and 3 starts in 2021, 2022, and 2023, respectively, and 4 thereafter. The SD therefore describes variation across start dates on IC11.

\subsubsection{Spatial diagnostics and relative changes}
\label{app:climate_spatial_metrics}

\paragraph{Grouped spatial errors.} For method $a\in\{\mathrm{Base},\mathrm{CasEm}\}$, variable $j$, and ensemble member $m$, let $\overline{x}^{\,a}_{j,m}(s)$ denote the 10-year mean prediction at grid point $s$, and $\overline{x}^{*}_{j}(s)$ the corresponding reference mean. Define
\begin{align*}
e_{a,j}(s)&=\left[\frac1{25}\sum_{m=1}^{25}\left(\frac{\overline{x}^{\,a}_{j,m}(s)-\overline{x}^{*}_{j}(s)}{\sigma_j}\right)^2\right]^{1/2},
&E_{a,G}(s)&=\frac1{|G|}\sum_{j\in G}e_{a,j}(s),\notag\\
R_G(s)&=100\frac{E_{\mathrm{Base},G}(s)-E_{\mathrm{CasEm},G}(s)}{E_{\mathrm{Base},G}(s)}.
\end{align*}
Here $\sigma_j$ is the frozen per-variable target standard deviation, and $G$ is the 8 water layers, all 34 fields, or the 26 non-water fields. The square root is taken per variable before the equal average over variables. With $\omega_s=\cos(\operatorname{lat}_s)/\sum_{s'}\cos(\operatorname{lat}_{s'})$ and $\langle E_{a,G}\rangle_\omega=\sum_s\omega_sE_{a,G}(s)$, quoted group reductions are $100(1-\langle E_{\mathrm{CasEm},G}\rangle_\omega/\langle E_{\mathrm{Base},G}\rangle_\omega)$, not means of grid-cell percentages.

\paragraph{Relative reductions and aggregation.}
\phantomsection\label{app:climate_relative_changes}
For any error metric $E$, the relative reduction is
\[
\Delta_E
=
100\frac{E_{\mathrm{Base}}-E_{\mathrm{CasEm}}}
{E_{\mathrm{Base}}}.
\]

Across initial conditions, reductions are ratios of equal-trajectory mean errors,
\begin{equation*}
 100\left(1-\frac{\bar E^{\mathrm{CasEm}}}{\bar E^{\mathrm{Base}}}\right),
 \qquad \bar E^a=N_{\mathrm{IC}}^{-1}\sum_i E_i^a,
\end{equation*}
where $N_{\mathrm{IC}}$ is the number of evaluated initial-condition trajectories. These reductions are not means of per-trajectory percentages. Table~\ref{tab:climate_ensemble_scores} averages variable-wise relative reductions, rather than taking the ratio of group-mean errors used in Table~\ref{tab:climate_results}.

\paragraph{Ablation changes.} For TM error $E$, the relative change reported in Table~\ref{tab:climate_ablation_main} is
\begin{equation*}
 \Delta_{\mathrm{abl}}=100\frac{E_{\mathrm{abl}}-E_{\mathrm{full}}}{E_{\mathrm{full}}},
\end{equation*}
where $E_{\mathrm{full}}$ is the corresponding full-CasEm error and positive values indicate degradation.

For the horizontal correction-field comparisons in Table~\ref{tab:climate_horizontal_transforms}, define the relative improvement over Base and its reduction under ablation as
\begin{equation*}
 I(E)=100(1-E/E_{\mathrm{Base}}),\qquad
 L(E)=I_{\mathrm{full}}-I_{\mathrm{abl}}
 =100(E_{\mathrm{abl}}-E_{\mathrm{full}})/E_{\mathrm{Base}}.
\end{equation*}
Here $L$ is a loss of improvement in percentage points (pp), rather than the relative error change reported in Table~\ref{tab:climate_ablation_main}.

\subsubsection{Regional case-study evaluation}
\label{app:climate_case}

The following setup and metrics are used in Section~\ref{sec:climate_case} and Table~\ref{tab:case_south_asia_gain}.

\paragraph{Study domain.} Figure~\ref{fig:case_south_asia_location} shows the South Asian study domain.

\begin{figure}[!htbp]\centering
\includegraphics[width=0.5\linewidth]{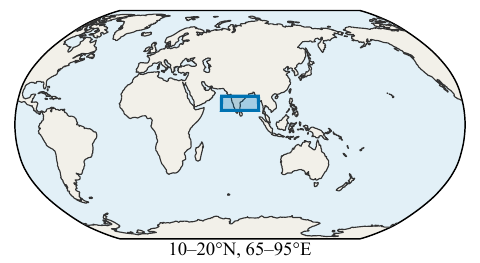}
\caption{South Asian case-study domain, outlined by the rectangle.}
\label{fig:case_south_asia_location}
\end{figure}

\paragraph{Daily fields and regional means.}
Daily TWP is obtained by averaging 4 frames at 6-hour intervals. Let $T_m(t,j)$ denote daily TWP for trajectory $m$ at grid cell $j$. Its regional mean is
\[
X_m(t)=\sum_j w_j T_m(t,j),
\qquad
w_j=\frac{\cos\phi_j}{\sum_k\cos\phi_k},
\]
where the sums run over cells in the domain and $\phi_j$ is latitude.

\paragraph{Slow-anomaly construction.}
For slow-anomaly diagnostics, we fit an intercept, a linear trend, and the first 4 sine--cosine harmonics of a 365-day annual cycle to each trajectory's continuous 10-year regional-mean sequence. Denoting the residual by $r_m(t)$, the slow anomaly is
\[
S_m(t)=\operatorname{MA}_{31}[r_m](t)-\operatorname{MA}_{127}[r_m](t).
\]
Both moving averages are centered.
Trimming the ends of the 3650-day sequence to accommodate
the complete 127-day window leaves 3524 valid days.
We then select June--August, retaining 10 complete summers
of 92 days each ($N=920$). Filtering thus precedes seasonal selection.

All regional metrics use the same June--August date set $\mathcal I$, with $N=|\mathcal I|$. The subscript $\mathrm{ref}$ denotes the reference trajectory. Only the amplitude, exceedance-frequency, and regional slow-RMSE diagnostics use the slow-anomaly series $S_m(t)$.

\paragraph{Slow-anomaly amplitude.}
Slow-anomaly amplitude is measured by the population standard deviation,
\[
\sigma_m=
\left[
\frac{1}{N}\sum_{t\in\mathcal I}
\bigl(S_m(t)-\overline S_m\bigr)^2
\right]^{1/2}.
\]
Here, $\overline S_m=N^{-1}\sum_{t\in\mathcal I}S_m(t)$. The amplitude error is
\[
E_{\mathrm{amp},m}=|\sigma_m-\sigma_{\mathrm{ref}}|.
\]

\paragraph{Exceedance frequency.}
For $p=95$, let $\tau_p$ be the $p$th percentile of $|S_{\mathrm{ref}}|$ over $\mathcal I$. The exceedance frequency and its absolute bias are
\[
f_{p,m}
=
\frac{1}{N}\sum_{t\in\mathcal I}
\mathbf 1\!\left(|S_m(t)|>\tau_p\right),
\qquad
E_{p,m}=100\,|f_{p,m}-f_{p,\mathrm{ref}}|.
\]
Actual frequencies are reported as percentages and frequency biases in percentage points (pp).

\paragraph{Regional and native-grid errors.}
Regional slow RMSE is
\[
R_{\mathrm{regional},m}^{\mathrm{slow}}
=
\left[
\frac{1}{N}\sum_{t\in\mathcal I}
\bigl(S_m(t)-S_{\mathrm{ref}}(t)\bigr)^2
\right]^{1/2}.
\]
Native-grid daily RMSE uses unfiltered daily fields and is
\[
R_{\mathrm{grid},m}^{\mathrm{daily}}
=
\left[
\frac{1}{N}\sum_{t\in\mathcal I}\sum_j
w_j\bigl(T_m(t,j)-T_{\mathrm{ref}}(t,j)\bigr)^2
\right]^{1/2}.
\]
Regional RMSE measures errors after spatial averaging, whereas native-grid RMSE retains spatially varying errors.

\paragraph{Transport, convergence, and wind.}
The additional transport, convergence, and wind diagnostics use unfiltered daily means over the same 920 summer days. At each six-hourly time, column-integrated total-water transport is $\mathbf F=\sum_{\ell=1}^{8}x^{\mathrm{water}}_\ell\Delta p_\ell(u_\ell,v_\ell)/g$, where $x^{\mathrm{water}}_\ell$ is total-water mixing ratio, $\Delta p_\ell$ is layer pressure thickness, and $u_\ell$ and $v_\ell$ are the zonal and meridional wind components, respectively; convergence is $C=-\nabla_h\cdot\mathbf F$, reported in kg m$^{-2}$ day$^{-1}$. Here, $\nabla_h\cdot$ denotes horizontal divergence. We average these diagnostic fields and the wind components over each day's four frames, then form area-weighted regional means before computing RMSE against the reference. Vector errors sum squared errors of the two horizontal components without dividing by two; wind RMSE additionally pools these squared errors equally over all eight model levels and days before taking the square root. No slow-anomaly filtering is applied.

\FloatBarrier
\subsection{Additional results}

\subsubsection{Robustness across trajectories and initialization times}
\label{app:climate_all11}

\paragraph{Cross-trajectory comparison.}
Table~\ref{tab:climate_multi_ic} supplements the IC11 comparison in Table~\ref{tab:climate_results} with mean errors across all 11 trajectories for ACE and S-DY.

\begin{table}[!htbp]
\caption{10-year climate errors averaged equally across 11 trajectories. Parentheses give reductions relative to the corresponding backbone.}
\label{tab:climate_multi_ic}
\centering\small\setlength{\tabcolsep}{3pt}
\begin{tabular}{@{}llrrrr@{}}\toprule
Backbone & Method & Full TM & Full ST & Water TM & Water ST \\\midrule
S-DY & Base & 0.0786 & 0.8136 & 0.2096 & 0.9500 \\
& CasEm & \textbf{0.0444} (43.46\%) & \textbf{0.7879} (3.16\%) & \textbf{0.0756} (63.92\%) & \textbf{0.8447} (11.08\%) \\
ACE & Base & 0.1353 & 0.8967 & 0.3699 & 1.2280 \\
& CasEm & \textbf{0.0502} (62.86\%) & \textbf{0.7903} (11.86\%) & \textbf{0.0804} (78.26\%) & \textbf{0.8693} (29.21\%) \\
\bottomrule\end{tabular}
\end{table}

\phantomsection\label{app:climate_annual}
\paragraph{Initialization-time comparison.} Figure~\ref{fig:climate_annual_by_start} shows annual full-state and total-water ST/TM error curves separately for the 2021--2024 starts on IC11, supplementing the means and SDs in Fig.~\ref{fig:climate_annual_tm}.

\begin{figure}[!htbp]
\centering
\includegraphics[width=\linewidth]{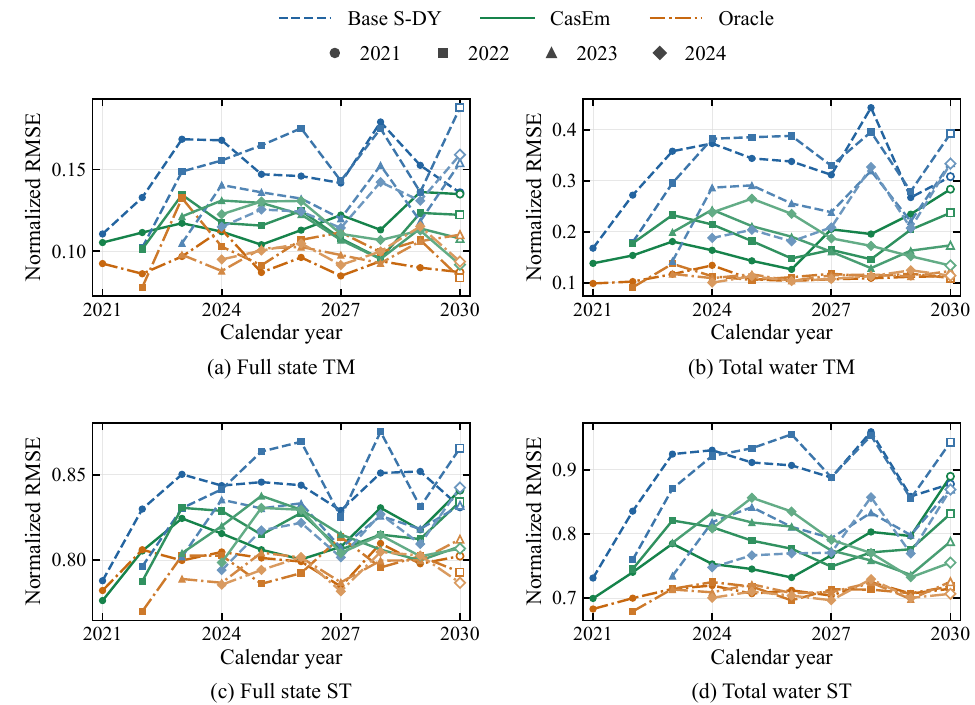}
\caption{Per-start annual full-state and total-water TM (a,b) and ST (c,d) NRMSE. Colors distinguish methods; markers distinguish 2021--2024 rollout starts on IC11.}
\label{fig:climate_annual_by_start}
\end{figure}

\FloatBarrier

\subsubsection{Energy-subsystem results}
\label{app:climate_energy}

Table~\ref{tab:ace_energy_summary} shows that energy-subsystem guidance improves both the energy readout and full-state predictions on IC11, with additional gains in total-water fields. All 6 reported errors also decrease when averaged equally across IC1--11. 
These results support energy aggregates as a potential subsystem choice. We acknowledge, however, that their benefits are less consistent across trajectories than those of water-subsystem guidance.
For S-DY, Base already predicts energy accurately (small $\rho$), leaving limited room for further gains from energy-subsystem guidance. We therefore focus on ACE here.
\begin{table}[!htbp]\centering\footnotesize
\setlength{\tabcolsep}{2pt}
\caption{ACE-Energy NRMSE on IC11 and averaged equally across IC1--11. Energy errors are computed from the full-state rollout. Parentheses give reductions relative to Base; reductions for IC1--11 use ratios of mean errors.}
\label{tab:ace_energy_summary}
\begin{tabular*}{\linewidth}{@{\extracolsep{\fill}}lrrrrrr@{}}\toprule
& \multicolumn{2}{c}{Full state} & \multicolumn{2}{c}{Total water} & \multicolumn{2}{c}{Energy} \\
\cmidrule(lr){2-3}\cmidrule(lr){4-5}\cmidrule(lr){6-7}
Method & ST & TM & ST & TM & ST & TM \\
\midrule
\multicolumn{7}{@{}l}{\textit{IC11}} \\
Base & 0.9409 & 0.1580 & 1.3832 & 0.4691 & 0.4710 & 0.1637 \\
CasEm & 0.8727 (7.2\%) & 0.1180 (25.3\%) & 1.1822 (14.5\%) & 0.3277 (30.1\%) & 0.4065 (13.7\%) & 0.0540 (67.0\%) \\
\midrule
\multicolumn{7}{@{}l}{\textit{IC1--11 mean}} \\
Base & 0.8967 & 0.1353 & 1.2280 & 0.3699 & 0.4873 & 0.0835 \\
CasEm & 0.8677 (3.2\%) & 0.1177 (13.0\%) & 1.1498 (6.4\%) & 0.3168 (14.4\%) & 0.3873 (20.5\%) & 0.0726 (13.1\%) \\
\bottomrule\end{tabular*}
\end{table}

\FloatBarrier
\subsubsection{Variable-wise and spatial improvements}
\label{app:climate_group_spatial}
To supplement the aggregate climate scores in Table~\ref{tab:climate_results}, Table~\ref{tab:climate_ensemble_scores} breaks down time-mean error reductions by physical variable group, and Fig.~\ref{fig:climate_group_tm_spatial} shows their spatial distribution. The spatial metrics and variable-wise aggregation are defined in Appendix~\ref{app:climate_spatial_metrics}.

\begin{table}[!htbp]\centering\footnotesize
\caption{10-year-mean-field error reductions across physical variable groups. Each cell gives the mean per-variable reduction and the number of improved variables.}
\label{tab:climate_ensemble_scores}
\begin{tabular}{@{}lccc@{}}\toprule
Group & Member RMSE & Ensemble-mean RMSE & Fair CRPS \\\midrule
Water & 18.49\% (8/8) & 16.11\% (6/8) & 16.45\% (5/8) \\
Temperature & 8.90\% (8/9) & 11.01\% (8/9) & 7.33\% (5/9) \\
Zonal wind & 7.25\% (8/8) & 8.00\% (8/8) & 8.11\% (6/8) \\
Meridional wind & 1.47\% (6/8) & 1.99\% (7/8) & 0.88\% (7/8) \\
Surface pressure & 5.38\% (1/1) & 9.75\% (1/1) & 9.19\% (1/1) \\\midrule
Other fields & 5.97\% (23/26) & 7.26\% (24/26) & 5.66\% (19/26) \\
Full state & 8.91\% (31/34) & 9.34\% (30/34) & 8.20\% (24/34) \\
\bottomrule\end{tabular}
\end{table}

Member-mean RMSE decreases in 31/34 fields, including 23/26 fields not directly corrected. Across the latter, mean member-RMSE and fair-CRPS reductions are 5.97\% and 5.66\%.

Area-averaged local error decreases by 73.94\% for water and 50.54\% for the full state. The non-water gains are smaller and spatially heterogeneous, complementing the aggregate improvement reported in the main text.

\begin{figure}[!htbp]
\centering
\includegraphics[width=\linewidth]{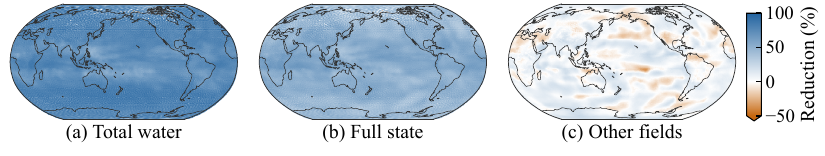}
\caption{Relative reduction in grouped spatial time-mean error for (a) 8 water layers, (b) all 34 fields, and (c) the 26 non-water fields. Blue indicates improvement and orange degradation.}
\label{fig:climate_group_tm_spatial}
\end{figure}

\FloatBarrier

\begingroup
\input{sections/climate_ablation_appendix.tex}

\endgroup

%% file: sections/climate_ablation_appendix.tex
\input{sections/appendix_e_ablations}

%% file: sections/appendix_e_ablations.tex

\subsubsection{Subsystem ablations}
\label{app:climate_ablation}

This section supplements Section~\ref{sec:climate_ablation} with absolute ST and TM errors and their relative changes from full CasEm for the ablations summarized in Table~\ref{tab:climate_ablation_main}, together with additional temporal and spatial comparisons.

\paragraph{Subsystem configurations.} Table~\ref{tab:climate_ablation_config} summarizes the subsystem configurations used in the horizontal-grid and vertical-group ablations; parameter counts exclude the backbone.

\begin{table}[!htbp]\centering\footnotesize
\caption{Target-model configurations for the horizontal-grid and vertical-group ablations.}
\label{tab:climate_ablation_config}
\begin{tabular}{@{}lrrrrr@{}}\toprule
Quantity & \multicolumn{5}{c}{Horizontal-grid configurations} \\\midrule
Horizontal grid & $1\!\times\!1$ & $3\!\times\!6$ & $6\!\times\!12$ & $12\!\times\!24$ & $18\!\times\!36$ \\
Parameters & 88 & 1,472 & 5,792 & 23,072 & 51,872 \\
Dynamic dimension & 8 & 32 & 32 & 32 & 32 \\
\bottomrule\end{tabular}
\par\vspace{0.4em}
\begin{tabular}{@{}lrrrr@{}}\toprule
Quantity & \multicolumn{4}{c}{Vertical-group configurations} \\\midrule
Vertical groups & 1 & 2 & 4 & 8 \\
Parameters & 2,884 & 5,768 & 11,536 & 23,072 \\
Dynamic dimension & 4 & 8 & 16 & 32 \\
\bottomrule\end{tabular}
\end{table}

\paragraph{Complete numerical results.} The relative-change and improvement-loss definitions are given in Appendix~\ref{app:climate_relative_changes}.

Table~\ref{tab:climate_ablation_absolute} gives the complete temporal results and additionally includes 10-year averaging, which replaces time-varying targets with a single 10-year-mean target.

\begin{table}[!htbp]\centering\footnotesize
\caption{Temporal target interventions: ST/TM NRMSE. Parentheses give relative changes (\%) from full CasEm; positive is worse.}
\label{tab:climate_ablation_absolute}
\setlength{\tabcolsep}{3pt}\renewcommand{\arraystretch}{0.94}
\begin{tabular}{@{}lrrrr@{}}\toprule
& \multicolumn{2}{c}{Full state} & \multicolumn{2}{c}{Total water} \\
\cmidrule(lr){2-3}\cmidrule(lr){4-5}
Configuration & ST & TM & ST & TM \\\midrule
Full CasEm & 0.80338 & 0.04011 & 0.73852 & 0.06652 \\
Monthly averaging & 0.80195 ($-0.2\%$) & 0.03830 ($-4.5\%$) & 0.73685 ($-0.2\%$) & 0.06521 ($-2.0\%$) \\
Quarterly averaging & 0.80326 ($0.0\%$) & 0.04171 ($+4.0\%$) & 0.73712 ($-0.2\%$) & 0.06475 ($-2.7\%$) \\
Annual averaging & 0.81774 ($+1.8\%$) & 0.04636 ($+15.6\%$) & 0.75383 ($+2.1\%$) & 0.07098 ($+6.7\%$) \\
10-year averaging & 0.81983 ($+2.0\%$) & 0.04576 ($+14.1\%$) & 0.76813 ($+4.0\%$) & 0.07205 ($+8.3\%$) \\
Seasonal phase shift & 0.81211 ($+1.1\%$) & 0.06260 ($+56.1\%$) & 0.76302 ($+3.3\%$) & 0.08093 ($+21.7\%$) \\
Year-1 target repeat & 0.83806 ($+4.3\%$) & 0.08031 ($+100.2\%$) & 0.87399 ($+18.3\%$) & 0.22821 ($+243.1\%$) \\
\bottomrule\end{tabular}
\end{table}

Table~\ref{tab:climate_retrained_grid_absolute} includes the additional $3\!\times\!6$ grid and the $12\!\times\!24$ reference. The $1\!\times\!1$ subsystem retains one arithmetic mean per water layer, giving a dynamic dimension of 8 rather than 32.

\begin{table}[!htbp]\centering\footnotesize
\caption{Retrained target grids: ST/TM NRMSE. Parentheses give relative changes (\%) from the $12\!\times\!24$ reference; positive is worse.}
\label{tab:climate_retrained_grid_absolute}
\setlength{\tabcolsep}{3pt}
\begin{tabular}{@{}lrrrr@{}}\toprule
& \multicolumn{2}{c}{Full state} & \multicolumn{2}{c}{Total water} \\
\cmidrule(lr){2-3}\cmidrule(lr){4-5}
Grid & ST & TM & ST & TM \\\midrule
$1\!\times\!1$ & 0.80909 ($+0.7\%$) & 0.06292 ($+56.9\%$) & 0.75808 ($+2.6\%$) & 0.13636 ($+105.0\%$) \\
$3\!\times\!6$ & 0.80427 ($+0.1\%$) & 0.04783 ($+19.2\%$) & 0.73847 ($0.0\%$) & 0.08588 ($+29.1\%$) \\
$6\!\times\!12$ & 0.80452 ($+0.1\%$) & 0.04631 ($+15.5\%$) & 0.73727 ($-0.2\%$) & 0.08053 ($+21.1\%$) \\
$12\!\times\!24$ & 0.80338 & 0.04011 & 0.73852 & 0.06652 \\
$18\!\times\!36$ & 0.80612 ($+0.3\%$) & 0.03994 ($-0.4\%$) & 0.74116 ($+0.4\%$) & 0.06669 ($+0.3\%$) \\
\bottomrule\end{tabular}
\end{table}

Table~\ref{tab:climate_vertical_retrained} extends vertical grouping to 1 and 2 groups and includes layer-mean redistribution. Groups use pressure-thickness-weighted layer means and supply shared mixing-ratio increments to their constituent layers. At each horizontal location, layer-mean redistribution assigns $1/8$ of the correction summed over the 8 layers to each layer before applying the correction constraints.

\begin{table}[!htbp]\centering\footnotesize
\caption{Vertical grouping and correction allocation: ST/TM NRMSE. Parentheses give relative changes (\%) from the 8-group reference; positive is worse.}
\label{tab:climate_vertical_retrained}
\setlength{\tabcolsep}{3pt}
\begin{tabular*}{\linewidth}{@{\extracolsep{\fill}}lrrrr@{}}\toprule
& \multicolumn{2}{c}{Full state} & \multicolumn{2}{c}{Total water} \\
\cmidrule(lr){2-3}\cmidrule(lr){4-5}
Groups & ST & TM & ST & TM \\\midrule
1 & \begin{tabular}[c]{@{}r@{}}32.19587\\($+3907.5\%$)\end{tabular} & \begin{tabular}[c]{@{}r@{}}8.59269\\($+21322.7\%$)\end{tabular} & \begin{tabular}[c]{@{}r@{}}82.51388\\($+11072.9\%$)\end{tabular} & \begin{tabular}[c]{@{}r@{}}19.14808\\($+28685.5\%$)\end{tabular} \\
2 & \begin{tabular}[c]{@{}r@{}}4.40199\\($+447.9\%$)\end{tabular} & \begin{tabular}[c]{@{}r@{}}1.31205\\($+3171.1\%$)\end{tabular} & \begin{tabular}[c]{@{}r@{}}11.10586\\($+1403.8\%$)\end{tabular} & \begin{tabular}[c]{@{}r@{}}2.78783\\($+4091.0\%$)\end{tabular} \\
4 & \begin{tabular}[c]{@{}r@{}}0.97867\\($+21.8\%$)\end{tabular} & \begin{tabular}[c]{@{}r@{}}0.24401\\($+508.3\%$)\end{tabular} & \begin{tabular}[c]{@{}r@{}}1.15386\\($+56.2\%$)\end{tabular} & \begin{tabular}[c]{@{}r@{}}0.64185\\($+864.9\%$)\end{tabular} \\
8 & 0.80338 & 0.04011 & 0.73852 & 0.06652 \\
\midrule
Layer mean & \begin{tabular}[c]{@{}r@{}}17.85571\\($+2122.6\%$)\end{tabular} & \begin{tabular}[c]{@{}r@{}}4.76563\\($+11781.3\%$)\end{tabular} & \begin{tabular}[c]{@{}r@{}}45.78716\\($+6099.9\%$)\end{tabular} & \begin{tabular}[c]{@{}r@{}}10.09095\\($+15069.8\%$)\end{tabular} \\
Fixed profile & \begin{tabular}[c]{@{}r@{}}5.76702\\($+617.8\%$)\end{tabular} & \begin{tabular}[c]{@{}r@{}}1.35919\\($+3288.6\%$)\end{tabular} & \begin{tabular}[c]{@{}r@{}}14.61720\\($+1879.3\%$)\end{tabular} & \begin{tabular}[c]{@{}r@{}}2.63062\\($+3854.6\%$)\end{tabular} \\
\bottomrule\end{tabular*}
\end{table}

\clearpage
\paragraph{Horizontal organization of the correction.}
The preceding ablations examine target resolution and
vertical correction allocation. Here, we further assess
whether effective guidance depends on the net correction
alone or also on its horizontal distribution.
Unlike target-grid retraining, these interventions transform
the correction field itself.
Each transformation preserves the signed area-weighted sum
within each water layer at every correction event, but not
necessarily the $L_2$ norm.
Longitude averaging retains the latitude profile, whereas
cosine-weighted latitude averaging retains the longitude
profile.

\begin{table}[!htbp]
\centering
\small
\caption{Absolute ST and TM NRMSE for horizontal
correction-field transformations.
Full CasEm is the unmodified reference; lower is better.
ST uses uniform spatial weights, whereas TM uses
cosine-latitude area weights.}
\label{tab:climate_horizontal_transforms}
\setlength{\tabcolsep}{5pt}
\begin{tabular}{@{}lrrrr@{}}
\toprule
& \multicolumn{2}{c}{Full state}
& \multicolumn{2}{c}{Total water} \\
\cmidrule(lr){2-3}\cmidrule(lr){4-5}
Transformation & ST & TM & ST & TM \\
\midrule
Full CasEm
& 0.80338 & 0.04011 & 0.73852 & 0.06652 \\
\midrule
Horizontally uniform
& 0.81495 & 0.05796 & 0.78065 & 0.11824 \\
Mean over longitude
& 0.80673 & 0.03870 & 0.74329 & 0.06419 \\
Cosine-weighted mean over latitude
& 0.81718 & 0.06227 & 0.77709 & 0.12481 \\
\bottomrule
\end{tabular}
\end{table}

Table~\ref{tab:climate_horizontal_transforms} shows that
horizontally uniform corrections increase both ST and TM
errors despite preserving the net correction within each
layer. The correction amount alone is therefore insufficient
to explain the gains.
Longitude averaging retains comparable performance,
including slightly lower TM errors, whereas averaging over
latitude substantially increases TM errors.
This contrast suggests that the latitude-dependent
distribution of the correction is more important than
detailed longitude dependence in this setting.
Together with the vertical-allocation results, these
findings support low-dimensional guidance that retains
key spatial organization without reproducing every
spatial detail.

%% file: main.bbl
\begin{thebibliography}{48}
\providecommand{\natexlab}[1]{#1}
\providecommand{\url}[1]{\texttt{#1}}
\expandafter\ifx\csname urlstyle\endcsname\relax
  \providecommand{\doi}[1]{doi: #1}\else
  \providecommand{\doi}{doi: \begingroup \urlstyle{rm}\Url}\fi

\bibitem[{\AA}str{\"o}m \& Murray(2008){\AA}str{\"o}m and Murray]{astrom2008feedback}
Karl~J. {\AA}str{\"o}m and Richard~M. Murray.
\newblock \emph{Feedback Systems: An Introduction for Scientists and Engineers}.
\newblock Princeton University Press, 2008.

\bibitem[Bassi et~al.(2026)Bassi, Zhu, Ye, Ren, Dektor, Mahoney, Bhat, and Yang]{bassi2026relift}
Hardeep Bassi, Yuanran Zhu, Erika Ye, Pu~Ren, Alec Dektor, Michael~W. Mahoney, Harish~S. Bhat, and Chao Yang.
\newblock {RELift}: Learned coarse-to-fine propagators for time-dependent {PDEs} with applications to plasma dynamics.
\newblock \emph{APL Computational Physics}, 2\penalty0 (2):\penalty0 026107, 2026.
\newblock \doi{10.1063/5.0333466}.

\bibitem[Beljaars(2003)]{beljaars2003hydrological}
A.~C.~M. Beljaars.
\newblock Some aspects of modelling of the hydrological cycle in the {ECMWF} model.
\newblock In \emph{{ECMWF/GEWEX} Workshop on Humidity Analysis, 8--11 July 2002}. ECMWF, 2003.
\newblock URL \url{https://www.ecmwf.int/sites/default/files/elibrary/2003/8039-some-aspects-modelling-hydrological-cycle-ecmwf-model.pdf}.

\bibitem[Bi et~al.(2023)Bi, Xie, Zhang, Chen, Gu, and Tian]{bi2023pangu}
Kaifeng Bi, Lingxi Xie, Hengheng Zhang, Xin Chen, Xiaotao Gu, and Qi~Tian.
\newblock Accurate medium-range global weather forecasting with {3D} neural networks.
\newblock \emph{Nature}, 619\penalty0 (7970):\penalty0 533--538, 2023.
\newblock \doi{10.1038/s41586-023-06185-3}.

\bibitem[Bonavita(2024)]{bonavita2024limitations}
Massimo Bonavita.
\newblock On some limitations of current machine learning weather prediction models.
\newblock \emph{Geophysical Research Letters}, 51\penalty0 (12):\penalty0 e2023GL107377, 2024.
\newblock \doi{10.1029/2023GL107377}.

\bibitem[Bonev et~al.(2023)Bonev, Kurth, Hundt, Pathak, Baust, Kashinath, and Anandkumar]{bonev2023sfno}
Boris Bonev, Thorsten Kurth, Christian Hundt, Jaideep Pathak, Maximilian Baust, Karthik Kashinath, and Anima Anandkumar.
\newblock Spherical {Fourier} neural operators: Learning stable dynamics on the sphere.
\newblock In \emph{Proceedings of the 40th International Conference on Machine Learning}, volume 202 of \emph{Proceedings of Machine Learning Research}, pp.\  2806--2823, 2023.
\newblock URL \url{https://proceedings.mlr.press/v202/bonev23a.html}.

\bibitem[Brandstetter et~al.(2022)Brandstetter, Worrall, and Welling]{brandstetter2022mppde}
Johannes Brandstetter, Daniel Worrall, and Max Welling.
\newblock Message passing neural {PDE} solvers.
\newblock In \emph{International Conference on Learning Representations}, 2022.

\bibitem[Champion et~al.(2019)Champion, Lusch, Kutz, and Brunton]{champion2019coordinates}
Kathleen Champion, Bethany Lusch, J.~Nathan Kutz, and Steven~L. Brunton.
\newblock Data-driven discovery of coordinates and governing equations.
\newblock \emph{Proceedings of the National Academy of Sciences}, 116\penalty0 (45):\penalty0 22445--22451, 2019.
\newblock \doi{10.1073/pnas.1906995116}.

\bibitem[Chattopadhyay et~al.(2024)Chattopadhyay, Sun, and Hassanzadeh]{chattopadhyay2024challenges}
Ashesh Chattopadhyay, Y.~Qiang Sun, and Pedram Hassanzadeh.
\newblock Challenges of learning multi-scale dynamics with {AI} weather models: Implications for stability and one solution.
\newblock \emph{arXiv preprint arXiv:2304.07029v2}, 2024.
\newblock URL \url{https://arxiv.org/abs/2304.07029v2}.

\bibitem[Chen et~al.(2023)Chen, Zhong, Zhang, Cheng, Xu, Qi, and Li]{chen2023fuxi}
Lei Chen, Xiaohui Zhong, Feng Zhang, Yuan Cheng, Yinghui Xu, Yuan Qi, and Hao Li.
\newblock {FuXi}: A cascade machine learning forecasting system for 15-day global weather forecast.
\newblock \emph{npj Climate and Atmospheric Science}, 6:\penalty0 190, 2023.
\newblock \doi{10.1038/s41612-023-00512-1}.

\bibitem[Eastman et~al.(2024)Eastman, Galvelis, Pel{\'a}ez, Abreu, Farr, Gallicchio, Gorenko, Henry, Hu, Huang, Kr{\"a}mer, Michel, Mitchell, Pande, Rodrigues, Rodriguez-Guerra, Simmonett, Singh, Swails, Turner, Wang, Zhang, Chodera, De~Fabritiis, and Markland]{eastman2024openmm}
Peter Eastman, Raimondas Galvelis, Ra{\'u}l~P. Pel{\'a}ez, Charlles R.~A. Abreu, Stephen~E. Farr, Emilio Gallicchio, Anton Gorenko, Michael~M. Henry, Frank Hu, Jing Huang, Andreas Kr{\"a}mer, Julien Michel, Joshua~A. Mitchell, Vijay~S. Pande, Jo{\~a}o P. G. L.~M. Rodrigues, Jaime Rodriguez-Guerra, Andrew~C. Simmonett, Sukrit Singh, Jason Swails, Philip Turner, Yuanqing Wang, Ivy Zhang, John~D. Chodera, Gianni De~Fabritiis, and Thomas~E. Markland.
\newblock {OpenMM 8}: Molecular dynamics simulation with machine learning potentials.
\newblock \emph{The Journal of Physical Chemistry B}, 128\penalty0 (1):\penalty0 109--116, 2024.
\newblock \doi{10.1021/acs.jpcb.3c06662}.

\bibitem[Goldstein(2011)]{goldstein2011classical}
Herbert Goldstein.
\newblock \emph{Classical mechanics}.
\newblock Pearson Education India, 2011.

\bibitem[Hu et~al.(2025)Hu, Wang, Zheng, Zhang, Feng, Feng, Wei, Wang, Ma, and Wu]{hu2024wdno}
Peiyan Hu, Rui Wang, Xiang Zheng, Tao Zhang, Haodong Feng, Ruiqi Feng, Long Wei, Yue Wang, Zhi-Ming Ma, and Tailin Wu.
\newblock Wavelet diffusion neural operator.
\newblock In \emph{International Conference on Learning Representations}, 2025.
\newblock URL \url{https://proceedings.iclr.cc/paper_files/paper/2025/hash/20eadb6338a53114a6371f102c5ded95-Abstract-Conference.html}.

\bibitem[Irvin et~al.(2026)Irvin, Han, Wang, Alharbi, Zhao, Bayarsaikhan, Visioni, Ng, and Watson-Parris]{irvin2026pyramid}
Jeremy~A. Irvin, Jiaqi Han, Zikui Wang, Abdulaziz Alharbi, Yufei Zhao, Nomin-Erdene Bayarsaikhan, Daniele Visioni, Andrew~Y. Ng, and Duncan Watson-Parris.
\newblock Spatiotemporal pyramid flow matching for climate emulation.
\newblock In \emph{Proceedings of the IEEE/CVF Conference on Computer Vision and Pattern Recognition}, 2026.

\bibitem[Jiang et~al.(2023)Jiang, Lu, Orlova, and Willett]{jiang2023invariant}
Ruoxi Jiang, Peter~Y. Lu, Elena Orlova, and Rebecca Willett.
\newblock Training neural operators to preserve invariant measures of chaotic attractors.
\newblock In \emph{Advances in Neural Information Processing Systems}, 2023.

\bibitem[Jiang et~al.(2025)Jiang, Zhang, Jakhar, Lu, Hassanzadeh, Maire, and Willett]{jiang2025hine}
Ruoxi Jiang, Xiao Zhang, Karan Jakhar, Peter~Y. Lu, Pedram Hassanzadeh, Michael Maire, and Rebecca Willett.
\newblock Hierarchical implicit neural emulators.
\newblock In \emph{Advances in Neural Information Processing Systems}, 2025.

\bibitem[Khalil(2002)]{khalil2002nonlinear}
Hassan~K. Khalil.
\newblock \emph{Nonlinear Systems}.
\newblock Prentice Hall, 3rd edition, 2002.

\bibitem[Kl{\"o}wer et~al.(2024)Kl{\"o}wer, Gelbrecht, Hotta, Willmert, Silvestri, Wagner, White, Hatfield, Kimpson, Constantinou, and Hill]{klower2024speedyweather}
Milan Kl{\"o}wer, Maximilian Gelbrecht, Daisuke Hotta, Justin Willmert, Simone Silvestri, Gregory~L. Wagner, Alistair White, Sam Hatfield, Tom Kimpson, Navid~C. Constantinou, and Chris Hill.
\newblock {SpeedyWeather.jl}: Reinventing atmospheric general circulation models towards interactivity and extensibility.
\newblock \emph{Journal of Open Source Software}, 9\penalty0 (98):\penalty0 6323, 2024.
\newblock \doi{10.21105/joss.06323}.

\bibitem[Lam et~al.(2023)Lam, Sanchez-Gonzalez, Willson, Wirnsberger, Fortunato, Alet, Ravuri, Ewalds, Eaton-Rosen, Hu, et~al.]{lam2023graphcast}
Remi Lam, Alvaro Sanchez-Gonzalez, Matthew Willson, Peter Wirnsberger, Meire Fortunato, Ferran Alet, Suman Ravuri, Timo Ewalds, Zach Eaton-Rosen, Weihua Hu, et~al.
\newblock Learning skillful medium-range global weather forecasting.
\newblock \emph{Science}, 382\penalty0 (6677):\penalty0 1416--1421, 2023.
\newblock \doi{10.1126/science.adi2336}.

\bibitem[Lee et~al.(2024)Lee, Gleckler, Ahn, Ordonez, Ullrich, Sperber, Taylor, Planton, Guilyardi, Durack, Bonfils, Zelinka, Chao, Dong, Doutriaux, Zhang, Vo, Boutte, Wehner, Pendergrass, Kim, Xue, Wittenberg, and Krasting]{lee2024pmp}
J.~Lee, P.~J. Gleckler, M.-S. Ahn, A.~Ordonez, P.~A. Ullrich, K.~R. Sperber, K.~E. Taylor, Y.~Y. Planton, E.~Guilyardi, P.~Durack, C.~Bonfils, M.~D. Zelinka, L.-W. Chao, B.~Dong, C.~Doutriaux, C.~Zhang, T.~Vo, J.~Boutte, M.~F. Wehner, A.~G. Pendergrass, D.~Kim, Z.~Xue, A.~T. Wittenberg, and J.~Krasting.
\newblock Systematic and objective evaluation of earth system models: {PCMDI Metrics Package (PMP)} version 3.
\newblock \emph{Geoscientific Model Development}, 17:\penalty0 3919--3948, 2024.
\newblock \doi{10.5194/gmd-17-3919-2024}.

\bibitem[Li et~al.(2021)Li, Kovachki, Azizzadenesheli, Liu, Bhattacharya, Stuart, and Anandkumar]{li2021fno}
Zongyi Li, Nikola Kovachki, Kamyar Azizzadenesheli, Burigede Liu, Kaushik Bhattacharya, Andrew Stuart, and Anima Anandkumar.
\newblock {Fourier} neural operator for parametric partial differential equations.
\newblock In \emph{International Conference on Learning Representations}, 2021.

\bibitem[Li et~al.(2022)Li, Liu-Schiaffini, Kovachki, Azizzadenesheli, Liu, Bhattacharya, Stuart, and Anandkumar]{li2022chaotic}
Zongyi Li, Miguel Liu-Schiaffini, Nikola Kovachki, Kamyar Azizzadenesheli, Burigede Liu, Kaushik Bhattacharya, Andrew Stuart, and Anima Anandkumar.
\newblock Learning chaotic dynamics in dissipative systems.
\newblock In \emph{Advances in Neural Information Processing Systems}, 2022.

\bibitem[Linot \& Graham(2022)Linot and Graham]{linot2022reduced}
Alec~J. Linot and Michael~D. Graham.
\newblock Data-driven reduced-order modeling of spatiotemporal chaos with neural ordinary differential equations.
\newblock \emph{Chaos}, 32\penalty0 (7):\penalty0 073110, 2022.
\newblock \doi{10.1063/5.0069536}.

\bibitem[Lippe et~al.(2023)Lippe, Veeling, Perdikaris, Turner, and Brandstetter]{lippe2023pderefiner}
Phillip Lippe, Bastiaan Veeling, Paris Perdikaris, Richard~E. Turner, and Johannes Brandstetter.
\newblock {PDE-Refiner}: Achieving accurate long rollouts with neural {PDE} solvers.
\newblock In \emph{Advances in Neural Information Processing Systems}, 2023.

\bibitem[List et~al.(2025)List, Chen, Bali, and Thuerey]{list2025differentiability}
Bjoern List, Li-Wei Chen, Kartik Bali, and Nils Thuerey.
\newblock Differentiability in unrolled training of neural physics simulators on transient dynamics.
\newblock \emph{Computer Methods in Applied Mechanics and Engineering}, 433:\penalty0 117441, 2025.
\newblock \doi{10.1016/j.cma.2024.117441}.

\bibitem[Lupin-Jimenez et~al.(2025)Lupin-Jimenez, Darman, Hazarika, Wu, Gray, He, Wong, and Chattopadhyay]{lupinjimenez2025fcds}
Leonard Lupin-Jimenez, Moein Darman, Subhashis Hazarika, Tianning Wu, Michael Gray, Ruoying He, Anthony Wong, and Ashesh Chattopadhyay.
\newblock Simultaneous emulation and downscaling with physically consistent deep learning-based regional ocean emulators.
\newblock \emph{Journal of Geophysical Research: Machine Learning and Computation}, 2\penalty0 (3):\penalty0 e2025JH000851, 2025.
\newblock \doi{10.1029/2025JH000851}.

\bibitem[Lusch et~al.(2018)Lusch, Kutz, and Brunton]{lusch2018koopman}
Bethany Lusch, J.~Nathan Kutz, and Steven~L. Brunton.
\newblock Deep learning for universal linear embeddings of nonlinear dynamics.
\newblock \emph{Nature Communications}, 9:\penalty0 4950, 2018.
\newblock \doi{10.1038/s41467-018-07210-0}.

\bibitem[Lyons(2004)]{lyons2004understanding}
Richard~G. Lyons.
\newblock \emph{Understanding Digital Signal Processing}.
\newblock Prentice Hall, 2nd edition, 2004.
\newblock ISBN 9780131089891.

\bibitem[Maier et~al.(2015)Maier, Martinez, Kasavajhala, Wickstrom, Hauser, and Simmerling]{maier2015ff14sb}
James~A. Maier, Carmenza Martinez, Koushik Kasavajhala, Lauren Wickstrom, Kevin~E. Hauser, and Carlos Simmerling.
\newblock {ff14SB}: Improving the accuracy of protein side chain and backbone parameters from {ff99SB}.
\newblock \emph{Journal of Chemical Theory and Computation}, 11\penalty0 (8):\penalty0 3696--3713, 2015.
\newblock \doi{10.1021/acs.jctc.5b00255}.

\bibitem[Morel et~al.(2025)Morel, Ramunno, Shen, Bietti, Cho, Cranmer, Golkar, Gugnin, Krawezik, Marwah, McCabe, Meyer, Mukhopadhyay, Ohana, Parker, Qu, Rozet, Leka, Lanusse, Fouhey, and Ho]{morel2025multiscale}
Rudy Morel, Francesco~Pio Ramunno, Jeff Shen, Alberto Bietti, Kyunghyun Cho, Miles Cranmer, Siavash Golkar, Olexandr Gugnin, Geraud Krawezik, Tanya Marwah, Michael McCabe, Lucas Meyer, Payel Mukhopadhyay, Ruben Ohana, Liam Parker, Helen Qu, Fran{\c c}ois Rozet, K.~D. Leka, Fran{\c c}ois Lanusse, David Fouhey, and Shirley Ho.
\newblock Predicting partially observable dynamical systems via diffusion models with a multiscale inference scheme.
\newblock In \emph{Advances in Neural Information Processing Systems}, volume~38, pp.\  74387--74405, 2025.
\newblock \doi{10.52202/085713-2240}.

\bibitem[Musaelian et~al.(2023)Musaelian, Batzner, Johansson, Sun, Owen, Kornbluth, and Kozinsky]{musaelian2023allegro}
Albert Musaelian, Simon Batzner, Anders Johansson, Lixin Sun, Cameron~J. Owen, Mordechai Kornbluth, and Boris Kozinsky.
\newblock Learning local equivariant representations for large-scale atomistic dynamics.
\newblock \emph{Nature Communications}, 14:\penalty0 579, 2023.
\newblock \doi{10.1038/s41467-023-36329-y}.

\bibitem[Onufriev et~al.(2004)Onufriev, Bashford, and Case]{onufriev2004obc}
Alexey Onufriev, Donald Bashford, and David~A. Case.
\newblock Exploring protein native states and large-scale conformational changes with a modified generalized {Born} model.
\newblock \emph{Proteins: Structure, Function, and Bioinformatics}, 55\penalty0 (2):\penalty0 383--394, 2004.
\newblock \doi{10.1002/prot.20033}.

\bibitem[Pathak et~al.(2017)Pathak, Ghosh, Kumar, and Murtugudde]{pathak2017moisture}
Amey Pathak, Subimal Ghosh, Praveen Kumar, and Raghu Murtugudde.
\newblock Role of oceanic and terrestrial atmospheric moisture sources in intraseasonal variability of {Indian} summer monsoon rainfall.
\newblock \emph{Scientific Reports}, 7:\penalty0 12729, 2017.
\newblock \doi{10.1038/s41598-017-13115-7}.

\bibitem[Pathak et~al.(2022)Pathak, Subramanian, Harrington, Raja, Chattopadhyay, Mardani, Kurth, Hall, Li, Azizzadenesheli, et~al.]{pathak2022fourcastnet}
Jaideep Pathak, Shashank Subramanian, Peter Harrington, Sanjeev Raja, Ashesh Chattopadhyay, Morteza Mardani, Thorsten Kurth, David Hall, Zongyi Li, Kamyar Azizzadenesheli, et~al.
\newblock {FourCastNet}: A global data-driven high-resolution weather model using adaptive {Fourier} neural operators.
\newblock \emph{arXiv preprint arXiv:2202.11214}, 2022.

\bibitem[Perkins et~al.(2025)Perkins, Kwa, McGibbon, Arcomano, Clark, Watt-Meyer, Bretherton, and Harris]{perkins2025hiroace}
W.~Andre Perkins, Anna Kwa, Jeremy McGibbon, Troy Arcomano, Spencer~K. Clark, Oliver Watt-Meyer, Christopher~S. Bretherton, and Lucas~M. Harris.
\newblock {HiRO-ACE}: Fast and skillful {AI} emulation and downscaling trained on a 3 km global storm-resolving model.
\newblock \emph{arXiv preprint arXiv:2512.18224}, 2025.
\newblock URL \url{https://arxiv.org/abs/2512.18224}.

\bibitem[Price et~al.(2025)Price, Sanchez-Gonzalez, Alet, Andersson, El-Kadi, Masters, Ewalds, Stott, Mohamed, Battaglia, Lam, and Willson]{price2025gencast}
Ilan Price, Alvaro Sanchez-Gonzalez, Ferran Alet, Tom~R. Andersson, Andrew El-Kadi, Dominic Masters, Timo Ewalds, Jacklynn Stott, Shakir Mohamed, Peter Battaglia, Remi Lam, and Matthew Willson.
\newblock Probabilistic weather forecasting with machine learning.
\newblock \emph{Nature}, 637\penalty0 (8044):\penalty0 84--90, 2025.
\newblock \doi{10.1038/s41586-024-08252-9}.

\bibitem[Rozet et~al.(2025)Rozet, Ohana, McCabe, Louppe, Lanusse, and Ho]{rozet2025latent}
Fran{\c c}ois Rozet, Ruben Ohana, Michael McCabe, Gilles Louppe, Fran{\c c}ois Lanusse, and Shirley Ho.
\newblock Lost in latent space: An empirical study of latent diffusion models for physics emulation.
\newblock In \emph{Advances in Neural Information Processing Systems}, 2025.

\bibitem[R{\"u}hling~Cachay et~al.(2023)R{\"u}hling~Cachay, Zhao, Joren, and Yu]{ruhlingcachay2023dyffusion}
Salva R{\"u}hling~Cachay, Bo~Zhao, Hailey Joren, and Rose Yu.
\newblock {DYffusion}: A dynamics-informed diffusion model for spatiotemporal forecasting.
\newblock In \emph{Advances in Neural Information Processing Systems}, 2023.

\bibitem[R{\"u}hling~Cachay et~al.(2024)R{\"u}hling~Cachay, Henn, Watt-Meyer, Bretherton, and Yu]{ruhlingcachay2024sphericaldyffusion}
Salva R{\"u}hling~Cachay, Brian Henn, Oliver Watt-Meyer, Christopher~S. Bretherton, and Rose Yu.
\newblock Probabilistic emulation of a global climate model with spherical {DYffusion}.
\newblock In \emph{Advances in Neural Information Processing Systems}, 2024.

\bibitem[Sanchez-Gonzalez et~al.(2020)Sanchez-Gonzalez, Godwin, Pfaff, Ying, Leskovec, and Battaglia]{sanchezgonzalez2020gns}
Alvaro Sanchez-Gonzalez, Jonathan Godwin, Tobias Pfaff, Rex Ying, Jure Leskovec, and Peter Battaglia.
\newblock Learning to simulate complex physics with graph networks.
\newblock In \emph{Proceedings of the 37th International Conference on Machine Learning}, volume 119 of \emph{Proceedings of Machine Learning Research}, pp.\  8459--8468, 2020.
\newblock URL \url{https://proceedings.mlr.press/v119/sanchez-gonzalez20a.html}.

\bibitem[Satorras et~al.(2021)Satorras, Hoogeboom, and Welling]{satorras2021egnn}
Victor~Garcia Satorras, Emiel Hoogeboom, and Max Welling.
\newblock {$E(n)$} equivariant graph neural networks.
\newblock In \emph{Proceedings of the 38th International Conference on Machine Learning}, volume 139 of \emph{Proceedings of Machine Learning Research}, pp.\  9323--9332, 2021.
\newblock URL \url{https://proceedings.mlr.press/v139/satorras21a.html}.

\bibitem[Trefethen(2000)]{trefethen2000spectral}
Lloyd~N Trefethen.
\newblock \emph{Spectral methods in MATLAB}.
\newblock SIAM, 2000.

\bibitem[Tripura \& Chakraborty(2023)Tripura and Chakraborty]{tripura2022wno}
Tapas Tripura and Souvik Chakraborty.
\newblock Wavelet neural operator for solving parametric partial differential equations in computational mechanics problems.
\newblock \emph{Computer Methods in Applied Mechanics and Engineering}, 404:\penalty0 115783, 2023.
\newblock \doi{10.1016/j.cma.2022.115783}.

\bibitem[Vlachas et~al.(2022)Vlachas, Arampatzis, Uhler, and Koumoutsakos]{vlachas2022led}
Pantelis~R. Vlachas, Georgios Arampatzis, Caroline Uhler, and Petros Koumoutsakos.
\newblock Multiscale simulations of complex systems by learning their effective dynamics.
\newblock \emph{Nature Machine Intelligence}, 4:\penalty0 359--366, 2022.
\newblock \doi{10.1038/s42256-022-00464-w}.

\bibitem[Watt-Meyer et~al.(2023)Watt-Meyer, Dresdner, McGibbon, Clark, Henn, Duncan, Brenowitz, Kashinath, Pritchard, Bonev, Peters, and Bretherton]{wattmeyer2023ace}
Oliver Watt-Meyer, Gideon Dresdner, Jeremy McGibbon, Spencer~K. Clark, Brian Henn, James Duncan, Noah~D. Brenowitz, Karthik Kashinath, Michael~S. Pritchard, Boris Bonev, Matthew~E. Peters, and Christopher~S. Bretherton.
\newblock {ACE}: A fast, skillful learned global atmospheric model for climate prediction.
\newblock \emph{arXiv preprint arXiv:2310.02074}, 2023.
\newblock URL \url{https://arxiv.org/abs/2310.02074}.

\bibitem[Watt-Meyer et~al.(2025)Watt-Meyer, Henn, McGibbon, Clark, Kwa, Perkins, Wu, Harris, and Bretherton]{wattmeyer2025ace2}
Oliver Watt-Meyer, Brian Henn, Jeremy McGibbon, Spencer~K. Clark, Anna Kwa, W.~Andre Perkins, Elynn Wu, Lucas Harris, and Christopher~S. Bretherton.
\newblock {ACE2}: Accurately learning subseasonal to decadal atmospheric variability and forced responses.
\newblock \emph{npj Climate and Atmospheric Science}, 8:\penalty0 205, 2025.
\newblock \doi{10.1038/s41612-025-01090-0}.

\bibitem[Webster et~al.(1998)Webster, Maga{\~n}a, Palmer, Shukla, Tomas, Yanai, and Yasunari]{webster1998monsoons}
P.~J. Webster, V.~O. Maga{\~n}a, T.~N. Palmer, J.~Shukla, R.~A. Tomas, M.~Yanai, and T.~Yasunari.
\newblock Monsoons: Processes, predictability, and the prospects for prediction.
\newblock \emph{Journal of Geophysical Research: Oceans}, 103\penalty0 (C7):\penalty0 14451--14510, 1998.
\newblock \doi{10.1029/97JC02719}.

\bibitem[Zhou et~al.(2019)Zhou, Lin, Chen, Harris, Chen, and Rees]{zhou2019fv3gfs}
Linjiong Zhou, Shian-Jiann Lin, Jan-Huey Chen, Lucas~M. Harris, Xi~Chen, and Shannon~L. Rees.
\newblock Toward convective-scale prediction within the next generation global prediction system.
\newblock \emph{Bulletin of the American Meteorological Society}, 100\penalty0 (7):\penalty0 1225--1243, 2019.
\newblock \doi{10.1175/BAMS-D-17-0246.1}.

\end{thebibliography}
